\PassOptionsToPackage{table}{xcolor}
\documentclass[10pt, a4paper, logo]{qwen}
\usepackage{wrapfig} 
\usepackage{times}
\usepackage{amsmath}
\usepackage{amssymb}
\usepackage{amsthm}
\usepackage{booktabs}
\usepackage{graphicx}
\usepackage{svg}
\usepackage{subcaption}
\usepackage{float}
\usepackage{multirow}
\usepackage{array}
\usepackage{tabularx}
\usepackage{xspace}
\usepackage{siunitx}
\usepackage[ruled,linesnumbered,lined]{algorithm2e}
\usepackage{enumitem}
\usepackage{natbib}
\usepackage{mdframed}

\definecolor{BestColor}{HTML}{C8E6C9}
\definecolor{FullTokenBlue}{HTML}{D7E6F7}
\definecolor{OursBlue}{HTML}{EAF3FF}
\definecolor{SkillPoolBorder}{HTML}{7561C9}
\definecolor{SkillPoolFill}{HTML}{F3F0FF}

\newcolumntype{C}[1]{>{\centering\arraybackslash}p{#1}}
\newcolumntype{L}[1]{>{\raggedright\arraybackslash}p{#1}}
\newcommand{\method}{\textsc{OmniDelta}\xspace}
\newcommand{\base}{\textsc{OmniZip}\xspace}

\newcommand{\paperfloatwidth}{\linewidth}
\newtheorem{proposition}{Proposition}
\newtheorem{corollary}{Corollary}
\newtheorem{assumption}{Assumption}
\newmdenv[
    linewidth=0.6pt,
    linecolor=black,
    backgroundcolor=black!2,
    skipabove=8pt,
    skipbelow=8pt,
    innerleftmargin=8pt,
    innerrightmargin=8pt,
    innertopmargin=7pt,
    innerbottommargin=7pt
]{promptbox}
\newmdenv[
    linewidth=0.8pt,
    linecolor=SkillPoolBorder,
    backgroundcolor=SkillPoolFill,
    skipabove=8pt,
    skipbelow=8pt,
    innerleftmargin=9pt,
    innerrightmargin=9pt,
    innertopmargin=8pt,
    innerbottommargin=8pt
]{skillbox}

\title{\textsc{OmniDelta}: Skill-Driven Budget Allocation for Token Compression in OmniLLMs}

\author[1,2]{Haoyang Huang\textsuperscript{*}}
\author[1,2]{Wenjie Huang\textsuperscript{*}}
\author[3,2]{Tianqi Xu\textsuperscript{*}}
\author[4,2]{Hongyaoxing Gu}
\author[1]{Kang Tan}
\author[1]{Yikai Fu}
\author[1,2]{Yuhao Shen}
\author[5]{Tianyu Liu}
\author[2]{Baolin Zhang}
\author[2]{Jun Zhang\textsuperscript{\ensuremath{\dagger,\ddagger}}}
\author[2]{Xinyi Hu\textsuperscript{\ensuremath{\dagger}}}
\author[2]{Jun Dai}
\author[2]{Shuang Ge\textsuperscript{\ensuremath{\ddagger}}}
\author[2]{Lei Chen}
\author[2]{Yue Li}
\author[2]{Mingchen Wang}
\author[1]{Meng Zhang\textsuperscript{\ensuremath{\dagger}}}
\affil[1]{Zhejiang University}
\affil[2]{Qwen Application, Alibaba}
\affil[3]{Carnegie Mellon University}
\affil[4]{University of Chinese Academy of Sciences}
\affil[5]{University of Science and Technology of China}

\begin{abstract}
Emerging Omni-modal Large Language Models (OmniLLMs) enable unified understanding of text, audio, and video, but their long audio-video token sequences introduce substantial memory and inference costs.
Existing compression methods mainly focus on selecting important tokens under fixed budgets, leaving the preceding budget-allocation problem underexplored.
We show that direct query-to-audio/video similarity is unreliable for inter-modal budget allocation, and that uniform intra-modal budgets can miss key evidence while retaining redundant content.
To address these limitations, we propose \method, a training-free \textbf{skill-driven} framework that couples \textbf{intent-aware} inter-modal allocation with \textbf{content-aware} intra-modal allocation.
\method first constructs audio and video skill pools to shift the fixed retained-token budget according to query demand, then reallocates modality budgets over audio segments and video frames using local complexity and temporal redundancy.
The resulting local budgets can be combined with existing pruning strategies, preserving the total retained-token ratio while changing where the budget is spent.
Experiments on four audio-video benchmarks with two \texttt{Qwen2.5-Omni} models show that \method establishes a new accuracy-efficiency Pareto frontier across pruning ratios. For example, at 25\% token retention on \texttt{Qwen2.5-Omni-7B}, \method reduces GPU memory by \textbf{22.0\%} and achieves a \textbf{1.64$\times$} end-to-end speedup over full-token inference.
\end{abstract}

\makeatletter
\renewcommand{\maketitle}{\bgroup\setlength{\parindent}{0pt}
    \begin{adjustwidth}{0pt}{24pt}
        \begin{flushleft}
            {
                {\raggedright \titlefont \@title\par}%
                \vskip11pt
                {\raggedright \@author\par}
                \vskip0pt {\raggedright \footnotesize \textsuperscript{*}Core contribution. \textsuperscript{\ensuremath{\dagger}}Corresponding author. \textsuperscript{\ensuremath{\ddagger}}Project leader.\par}
                \vskip10pt%
            }%
        \end{flushleft}
    \end{adjustwidth}
    \begin{center}
        \includegraphics[width=\textwidth]{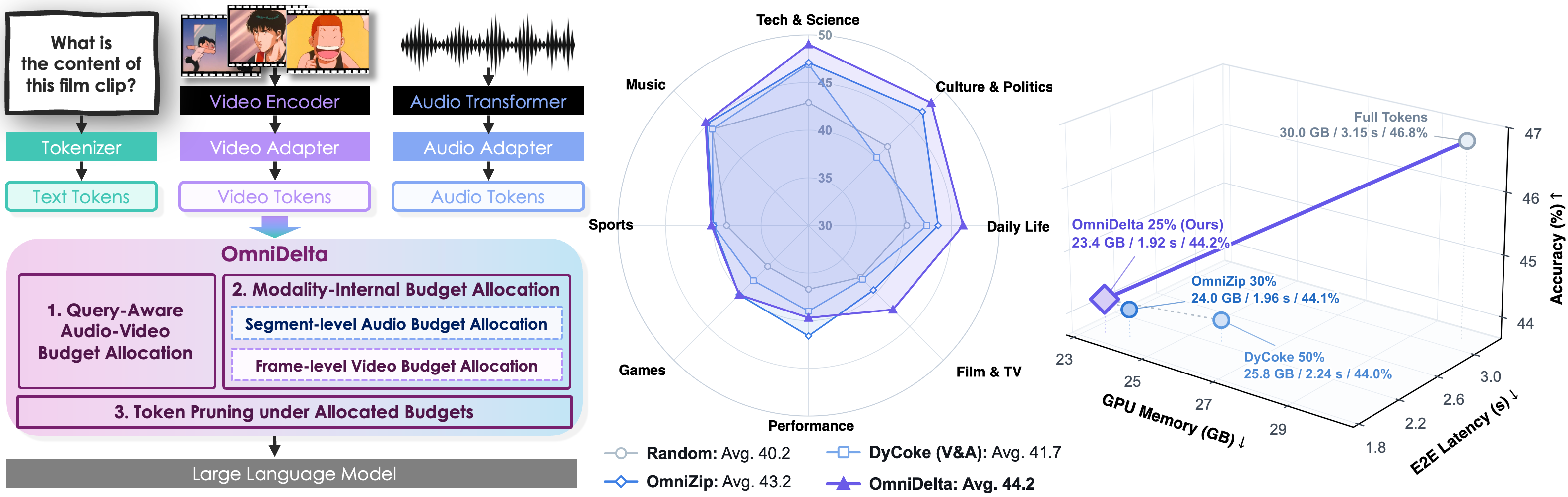}
        \captionof{figure}{Overview and main results of \method. Left: overview of \method. Middle: \method outperforms compressed baselines on WorldSense. Right: \method establishes a new Pareto frontier on Qwen2.5-Omni-7B by improving accuracy, reducing GPU memory, and lowering end-to-end latency.}
        \label{fig:teaser_overview}
    \end{center}
    \egroup
    {%
        {\abscontent}
    }%
    \thispagestyle{firststyle}
}%
\makeatother

\begin{document}
\maketitle
\section{Introduction}

Omni-modal large language models (OmniLLMs) extend multimodal reasoning from image/video-text inputs to unified understanding of text, audio, and video streams.
Recent systems such as VITA, VideoLLaMA, and Qwen-Omni demonstrate strong audio-video interaction capabilities by placing heterogeneous tokens into a shared autoregressive backbone~\citep{fu2024vita,cheng2024videollama2,qwen2025qwen25omni,qwen2025qwen3omni}.
However, this unified interface also makes inference expensive.
For example, in Qwen-Omni models, a 60-second video sampled at 2 FPS contains 120 frames; with 72 visual tokens per frame and 25 audio tokens per second, the audio-video input alone reaches roughly 10K tokens before adding the text query and special tokens.
Such long multimodal contexts substantially increase memory usage and prefilling cost, making token compression essential for practical OmniLLM inference~\citep{zhang-etal-2026-efficient}.

Existing compression methods mainly ask which tokens should be retained.
Image and video pruning methods reduce sequence length by removing redundant visual tokens~\citep{bolya2022tome,chen2024fastv,tao2025dycoke,fu2025framefusion}.
In contrast, token compression in omni-modal models must account for cross-modal relevance between audio and video when identifying redundant tokens.
\base uses audio attention to guide video pruning in temporally aligned windows~\citep{tao2025omnizip};
OmniSIFT and EchoingPixels learn cross-modal token selectors with training-based straight-through estimators~\citep{ding2026omnisift,gong2025echoingpixels};
OmniRefine and OmniFit refine chunk-wise or layer-wise compression policies~\citep{deng2026omnirefine,wang2026omnifit}.
Despite their different token selection mechanisms, these methods operate under predefined budgets: modality-level and intra-modal budgets remain fixed, while OmniFit follows a fixed layer-wise budget schedule.
They therefore optimize token selection without investigating how the retained-token budget should adapt to the query and input content.
This raises a complementary question: \textit{\textbf{Before selecting which tokens to retain, how should the fixed budget be allocated?}}

This budget-allocation problem is especially important for OmniLLMs because the usefulness of audio and video depends on both the query and the sample.
A sound-centric question may require more audio tokens, while a visual-detail question may require more video tokens.
At the same time, information is unevenly distributed within each modality.
For example, a video contains information-rich key frames as well as low-information transitional frames, while audio likewise alternates between informative and redundant moments.
Assigning the same budget to all frames or moments may waste tokens on uninformative content while leaving insufficient capacity to preserve critical evidence.

Therefore, we investigate two questions. \textbf{Question 1:} \textit{\textbf{How can the relevance of a query to the audio and video modalities be measured?}} Direct query-to-audio/video cosine similarity appears natural, but our analysis finds it unreliable for both encoder-projector embeddings and thinker hidden states: the query primarily expresses task intent, whereas modality tokens encode sample-specific content. \textbf{Question 2:} \textit{\textbf{Does uniform intra-modal budget allocation hinder the preservation of critical evidence?}} In our audio and video probes, GPT-5.5-xhigh is given the reference answer to identify the most informative content for retention. Even with this strong pruning strategy, a uniform budget performs substantially worse than allowing the model to determine both the budget distribution and retained content under the same total budget. This indicates that token pruning alone cannot compensate for a suboptimal budget allocation.

Motivated by these observations, we introduce \method~\footnote{``Omni'' denotes omni-modal inputs, while ``Delta'' evokes a river delta that redistributes a conserved flow, mirroring fixed-budget reallocation between and within modalities.}, a training-free hierarchical \textbf{budget allocator} for OmniLLM token compression.
As illustrated in Fig.~\ref{fig:teaser_overview}, inspired by agentic skill methods that organize reusable capabilities through textual descriptions~\citep{jiang2026agenticskills}, we construct \textbf{modality-specific skill pools} by selecting representative keywords for audio- and video-oriented task types and expanding them with semantically related terms.
\method shifts the inter-modal budget according to the cosine similarity between the query and skill terms after both are mapped through the model's text embedding layer.
Within each modality, the resulting budget is further redistributed across audio segments or video frames according to local complexity and temporal redundancy.
Finally, tokens are pruned under the allocated local budgets using audio attention and video spatio-temporal redundancy, while remaining compatible with existing token-pruning strategies.
Thus, \method changes where the budget is spent without changing the total retained-token ratio.

Our contributions are summarized as follows:
\begin{itemize}[leftmargin=*]
    \item \textbf{Budget-allocation diagnostics.} We identify budget allocation as a distinct problem in OmniLLM token compression, complementary to token pruning, and show that direct query-to-audio/video similarity for inter-modal allocation and uniform intra-modal budgets are unreliable principles.
    \item \textbf{Skill-driven budget allocation.} We propose \method, a training-free hierarchical allocator that redistributes a fixed retained-token budget across audio and video modalities, audio segments, and video frames using query-skill routing, local complexity, and temporal redundancy.
    \item \textbf{Accuracy--efficiency gains.} We evaluate \method on four audio-video benchmarks and two \texttt{Qwen2.5-Omni} model sizes under multiple compression settings, showing that \method consistently advances the accuracy-efficiency Pareto frontier. At 25\% token retention on \texttt{Qwen2.5-Omni-7B}, \method reduces GPU memory by 22.0\% and achieves a 1.64$\times$ end-to-end speedup over full-token inference.
\end{itemize}

\section{Motivation}
\label{sec:motivation}

Most OmniLLM compression studies emphasize which tokens should be removed, while the retained budget is often fixed across queries or assigned by a predefined temporal policy.
Before designing a query-aware budget allocator, we first examine two assumptions behind budget assignment.
At the modality level, direct text-to-audio/video similarity is a natural routing signal, but it may not reflect the modality required by the query.
At the temporal level, window-based policies such as \base preserve audio-video alignment, yet their shared local budgets may still miss visually critical regions when evidence is uneven~\citep{tao2025omnizip}.
This section studies these two issues through controlled diagnostics.

\begin{figure}[!t]
    \centering
    \includegraphics[width=\linewidth]{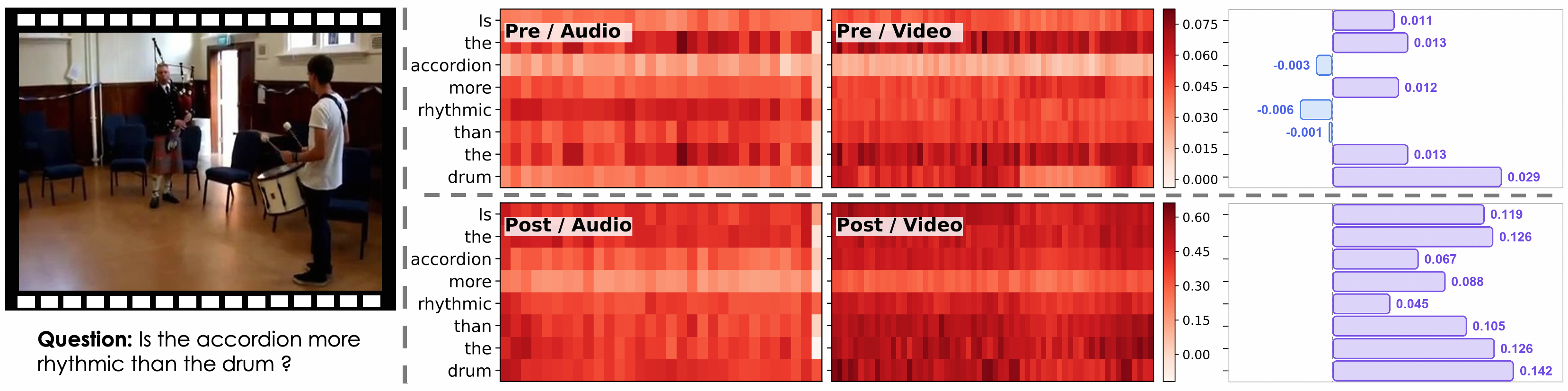}
    \caption{Representative failure of direct query-to-audio/video Top-3 similarity. The left shows an audio-oriented question, the center shows word-to-audio/video token similarities before and after the Thinker, and the right reports the aligned Top-3 margins $\Delta=s_{\mathrm{v}}-s_{\mathrm{a}}$, where positive values favor video.}
    \label{fig:cross_modal_similarity_case}
\end{figure}

\subsection{Unreliability of Direct Query-to-Audio/Video Similarity}
\label{sec:similarity_analysis}

At the modality-allocation level, a query-aware compressor must determine whether a question requires more evidence from audio or video.
A straightforward strategy is to compare the query representation against the audio and video token representations and allocate a larger budget to the modality with higher similarity.
However, this strategy can fail because the query primarily encodes task intent, whereas the modality tokens capture sample-specific content.
This observation motivates our first diagnostic question:

\noindent\textbf{Question 1:} \textit{Can direct query-to-audio/video similarity reliably guide modality-level budget allocation?}

We evaluate whether direct query-to-audio/video similarity can serve as a reliable modality router.
We use GPT-5.5-xhigh to categorize WorldSense queries~\citep{hong2026worldsense} according to the evidence modality required to answer them, and sample 200 audio-oriented and 200 video-oriented queries to construct a balanced diagnostic set of 400 queries.
For each query, the same model selects one \emph{modality keyword}, defined as the query word most indicative of whether acoustic or visual evidence is required.
We use this automatically selected keyword as a controlled diagnostic view rather than as a ground-truth semantic annotation.

All probes process the complete audio-video sequence without pruning.
Mean pooling averages similarities over all modality tokens and therefore measures global alignment.
In contrast, top-3 mean pooling averages the three largest similarities, testing whether a useful routing signal is localized to a small subset of highly matched tokens and would otherwise be diluted by full-sequence averaging.
We fix $k=3$ across all settings to preserve localized evidence while reducing sensitivity to a single spuriously high match.
We evaluate both the whole query and the modality keyword under these two pooling rules at three representation stages: pre-Thinker (input-query embeddings versus encoder-projector modality tokens), mid-Thinker (layer 14), and post-Thinker (the final hidden states before the LM head).

\begin{wrapfigure}{r}{0.4\textwidth}
    \centering
    \includegraphics[width=\linewidth]{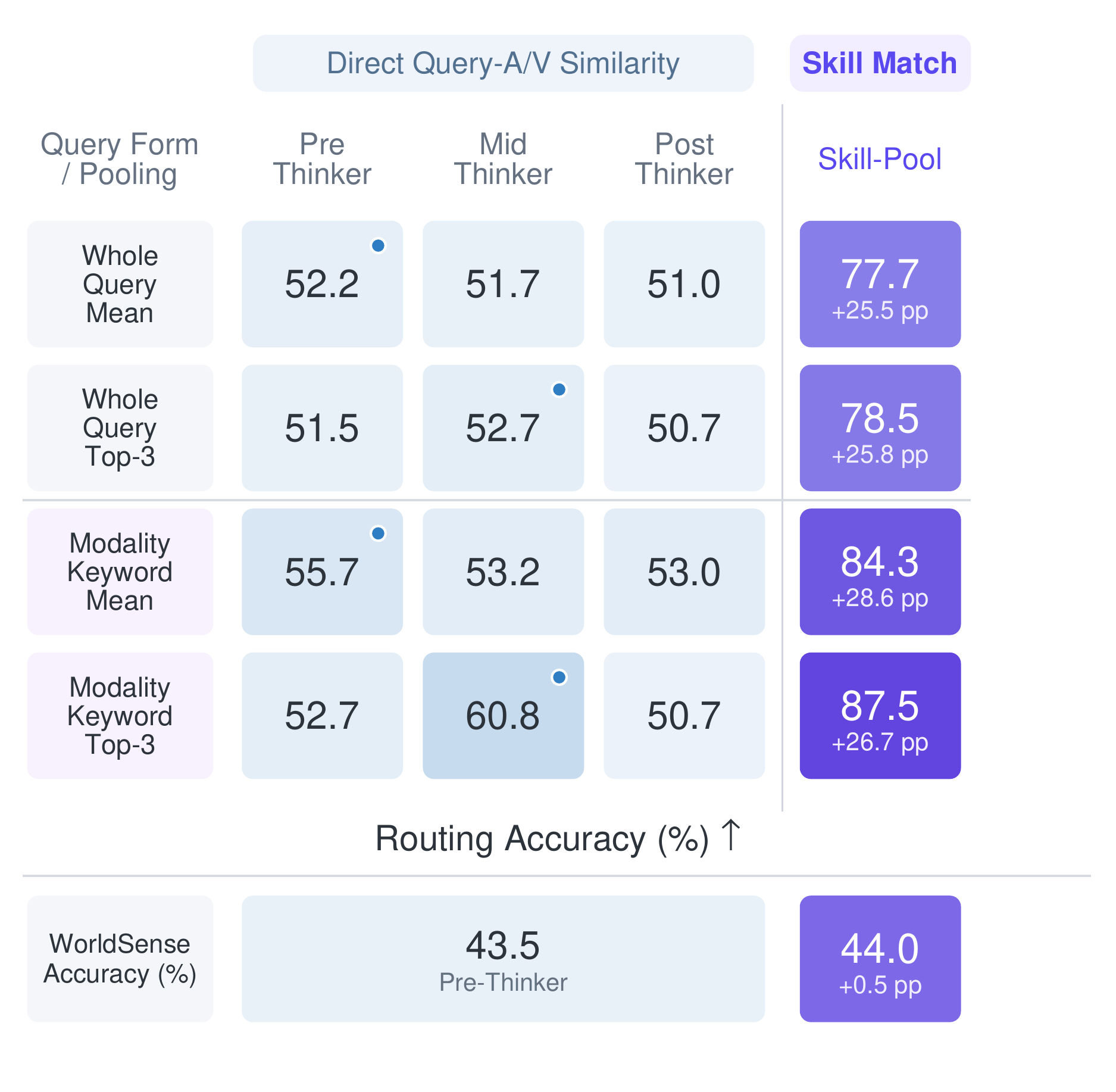}
    \caption{Routing accuracy on 400 WorldSense queries. Blue dots mark the maximum among the three direct-similarity columns in each row, and the rightmost skill-pool cell reports its improvement over that value; the bottom row shows WorldSense accuracy.}
    \label{fig:router_similarity_comparison}
    \vspace{-3em}
\end{wrapfigure}

Let $\mathbf{E}_{q}$ be the input query embeddings, $\mathbf{H}^{(\ell)}$ be the hidden states after the $\ell$-th Thinker block, and $\mathbf{H}^{(\mathrm{F})}$ be the final hidden states before the LM head.
Let $\mathbf{A}$ and $\mathbf{V}$ denote the encoder-projector audio and video token sets, respectively.
For a modality keyword $w$ with token positions $\mathcal{P}_w$, its stage-specific representation is
\begin{equation*}
    \mathbf{q}_{w}^{(0)}
    =
    \frac{1}{|\mathcal{P}_w|}
    \sum_{i\in\mathcal{P}_w}\mathbf{E}_{q,i},
    \qquad
    \mathbf{q}_{w}^{(\ell)}
    =
    \frac{1}{|\mathcal{P}_w|}
    \sum_{i\in\mathcal{P}_w}\mathbf{H}_{i}^{(\ell)},
    \quad \ell\in\{14,\mathrm{F}\}.
\end{equation*}
For whole-query routing, we replace $\mathcal{P}_w$ with $\mathcal{P}_q$, the set of all query-token positions.
The modality token sets are $\mathbf{Z}_{\mathrm{a}}^{(0)}=\mathbf{A}$ and $\mathbf{Z}_{\mathrm{v}}^{(0)}=\mathbf{V}$ at the pre-Thinker stage, and $\mathbf{Z}_{m}^{(\ell)}=\{\mathbf{z}_{m,j}^{(\ell)}\mid j\in\mathcal{I}_m\}$ at the Thinker stages, where $\mathcal{I}_m$ indexes the tokens of modality $m$ and $\mathbf{z}_{m,j}^{(\ell)}$ denotes an individual modality-token representation.
Given a pooling operator $g\in\{\operatorname{Mean},\operatorname{TopKMean}\}$, with $k=3$ for TopKMean, the query-to-modality similarity and the corresponding modality margin are
\begin{equation*}
    s_{w,m,g}^{(\ell)}
    =
    g_{j\in\mathcal{I}_m}
    \left(
    \frac{(\mathbf{q}_{w}^{(\ell)})^\top\mathbf{z}_{m,j}^{(\ell)}}
    {\|\mathbf{q}_{w}^{(\ell)}\|_2\|\mathbf{z}_{m,j}^{(\ell)}\|_2}
    \right),
    \qquad
    \Delta_{w,g}^{(\ell)}
    =
    s_{w,\mathrm{v},g}^{(\ell)}
    -
    s_{w,\mathrm{a},g}^{(\ell)}.
\end{equation*}
A positive margin predicts a video-oriented query, whereas a negative margin predicts an audio-oriented query.

Fig.~\ref{fig:cross_modal_similarity_case} illustrates a representative failure case for an audio-oriented query under Top-3 pooling.
At the pre-Thinker stage, ``accordion'' and the selected modality keyword ``rhythmic'' are only weakly audio-oriented, while ``drum'' and most other words already favor video; at the post-Thinker stage, every word has a positive video-minus-audio margin, including these acoustically salient terms.
The aligned heatmaps and margins show that deeper contextualization increases cross-modal mixing without recovering the required audio modality, so direct similarity can reflect sample content rather than the evidence needed to solve the task.

As summarized in Fig.~\ref{fig:router_similarity_comparison}, direct cosine routing remains close to random guessing across representation stages and pooling rules, with no consistent benefit from deeper representations or selective pooling; even the best direct variant reaches only 60.8\%.
Skill-pool routing is substantially stronger for both whole-query and keyword inputs, outperforming the matched direct variants by 25.5--28.6 percentage points.
This confirms that task prototypes better reflect query intent.
This diagnostic advantage translates into a smaller end-to-end gain.
Under the same \base intra-modal allocator and pruning backend, the embedding-similarity and skill-pool routers obtain 43.5\% and 44.0\% WorldSense accuracy, respectively.
Routing accuracy is therefore mechanism-level evidence; final performance also depends on intra-modal allocation and pruning.
Appendix~\ref{app:routing_details} provides the complete protocol and results, and Sec.~\ref{sec:skill_budget} introduces the resulting skill-based allocator.

\subsection{Suboptimality of Uniform Intra-Modal Budget Allocation}
\label{sec:fine_grained_budget_motivation}

\begin{figure}[!t]
    \vspace{-1em}
    \centering
    \includegraphics[width=\linewidth]{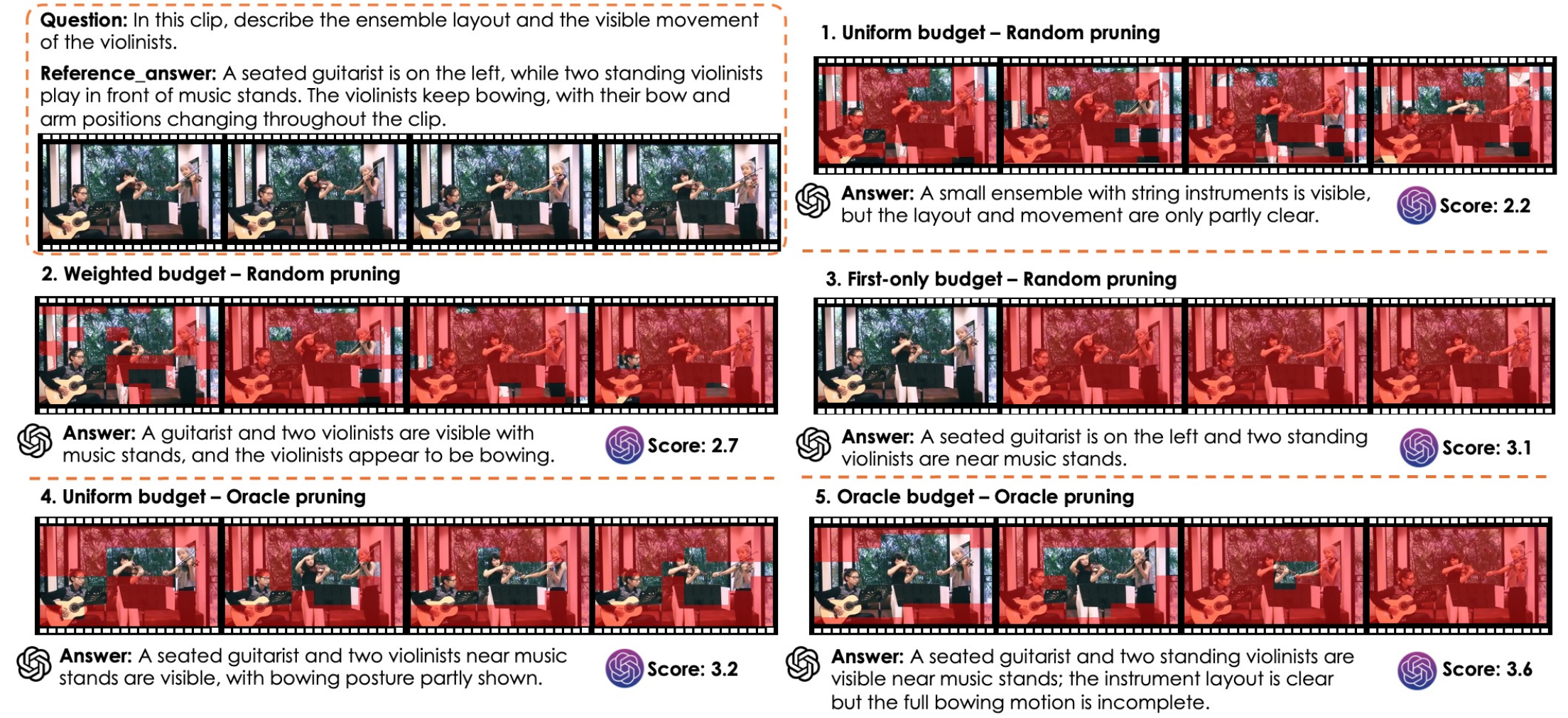}
    \caption{Fine-grained budget allocation diagnostic. Given the same local clip, different budget distributions and pruning policies expose different visible evidence to the answering model.}
    \label{fig:fine_grained_budget_allocation}
    \vspace{-1em}
\end{figure}

After the top-level modality split, the remaining question is how finely each modality should spend its own budget.
Window-level policies preserve temporal correspondence, but identical budgets inside a local window may retain redundant views while under-preserving key evidence.
This motivates the second key question:

\noindent\textbf{Question 2:} \textit{Does assigning identical budgets within a local window preserve redundant content while leaving insufficient capacity to capture key evidence?}

We test this with parallel diagnostics on video and audio.
We use \emph{local unit} to denote the element receiving an individual budget, namely one video frame or one audio segment.
For video, we sample 50 2-second WorldSense clips, extract four evenly spaced frames from each clip, and keep 25\% visible content.
GPT-5.5-xhigh generates a detail- and motion-oriented question with a reference answer from the full clip, answers using only the retained content, and scores the answer against the reference.
For audio, we sample 50 real WorldSense audio clips, take 12-second excerpts, split each excerpt into four 3-second segments, and prune at the granularity of 0.5-second chunks.
All audio policies keep 12 out of 24 chunks, with removed chunks replaced by silence; Gemini-3.1-Pro-preview follows the same generate-answer-score protocol.

\begin{wrapfigure}{r}{0.5\textwidth}
    \centering
    \includegraphics[width=\linewidth]{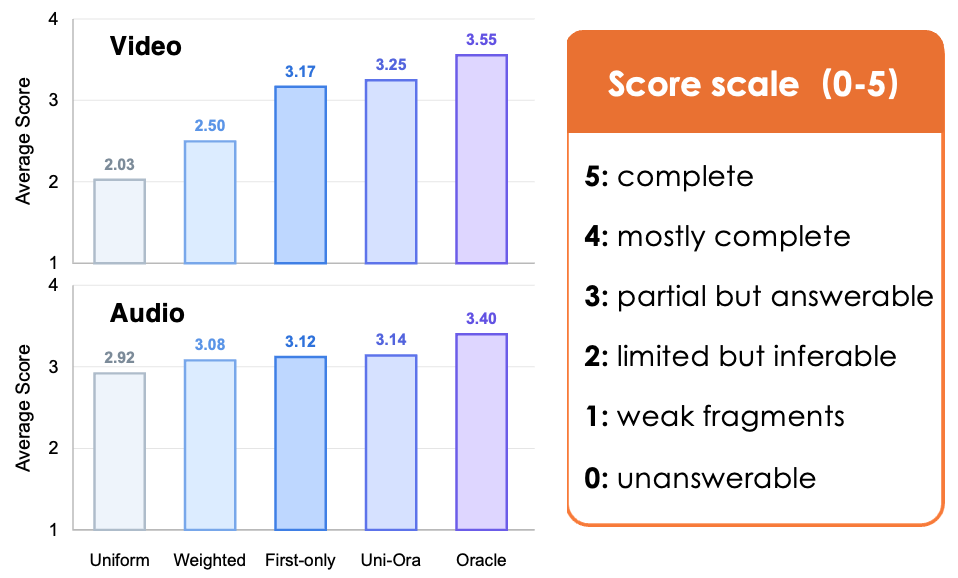}
    \caption{Average answer scores for video (top) and audio (bottom) budget diagnostics under five budget/pruning policies, together with the 0--5 score scale.}
    \label{fig:fine_grained_budget_scores}
\end{wrapfigure}

Fig.~\ref{fig:fine_grained_budget_allocation} and~\ref{fig:fine_grained_budget_scores} compare five policies.
\emph{Uniform budget--Random pruning} assigns equal budget to each unit and randomly keeps content within it.
\emph{Weighted budget--Random pruning} uses a fixed front-heavy split across the four units, $[0.70,0.15,0.10,0.05]$ for video and $[5,4,2,1]$ chunks for audio.
\emph{First-only} keeps only the earliest units, i.e., the first video frame or the first two audio segments.
\emph{Uniform budget--Oracle pruning} keeps equal unit budgets but uses answer-aware content selection, while \emph{Oracle budget--Oracle pruning} chooses both the unit budgets and the retained content under the same total budget.

Fig.~\ref{fig:fine_grained_budget_scores} shows consistent trends across audio and video.
Even with an answer-aware oracle pruning policy, Uniform--Oracle remains far below Oracle--Oracle because its budget allocation is still uniform, and its performance is only close to First-only.
This gap shows that pruning alone cannot compensate for an unsuitable budget split: even if the selector keeps the most important tokens within each unit, insufficient budget on informative units may still discard key evidence, while redundant units continue to occupy retained tokens.
Therefore, budget allocation is a critical step before token selection, motivating a finer allocator that redistributes the fixed retained budget across local units according to redundancy and information density.
The resulting intra-modal budget reallocation is detailed in Sec.~\ref{sec:audio_budget}.
Complete prompts and experimental details for the video and audio diagnostics are provided in Appendix~\ref{app:fine_grained_details}.

\section{Method}
\label{sec:method}

\subsection{Background on OmniLLMs}
\label{sec:background}

Omni-modal large language models (OmniLLMs) take synchronized video, audio, and text as a unified input sequence.
Given a video clip $\mathcal{V}$ and its audio waveform $\mathcal{A}$, we use $\Phi_{\mathrm{v}}(\cdot)$ and $\Phi_{\mathrm{a}}(\cdot)$ to denote the visual and audio encoder-projector pipelines, respectively, which map raw modality inputs into the LLM hidden space.
The resulting visual and audio tokens are denoted as
\begin{equation}
    \mathbf{V}=\Phi_{\mathrm{v}}(\mathcal{V})\in\mathbb{R}^{N_{\mathrm{v}}\times d},
    \qquad
    \mathbf{A}=\Phi_{\mathrm{a}}(\mathcal{A})\in\mathbb{R}^{N_{\mathrm{a}}\times d},
    \label{eq:omni_tokens}
\end{equation}
where $N_{\mathrm{v}}$ and $N_{\mathrm{a}}$ are the numbers of visual and audio tokens, and $d$ is the LLM hidden dimension.
The user instruction is tokenized as a text sequence $\mathbf{T}$, and the LLM receives an interleaved multimodal sequence composed of text, audio, and video tokens.

To preserve local audio-video correspondence, OmniLLMs usually organize projected audio and video tokens into temporally aligned chunks.
Let $\mathcal{C}_j=[\mathbf{V}_j;\mathbf{A}_j]$ denote the $j$-th multimodal chunk, where $\mathbf{V}_j$ and $\mathbf{A}_j$ are the visual and audio tokens belonging to the same time span.
The final input can be written as
\begin{equation}
    \mathbf{X}=[\mathbf{T};\mathcal{C}_1;\ldots;\mathcal{C}_J],
    \label{eq:omni_sequence}
\end{equation}
which is processed by the LLM backbone during prefilling and generation.
Since self-attention cost grows rapidly with sequence length, reducing audio-visual tokens before they enter the backbone is an effective way to improve inference efficiency.
However, audio and video have different redundancy patterns and their usefulness depends on the query.
This makes omni-modal compression not only a token-ranking problem, but also a budget allocation problem: under a fixed retained-token budget, the model must decide how much budget to assign to each modality and to each temporal region.

\begin{figure}[!t]
    \centering
    \includegraphics[width=\linewidth]{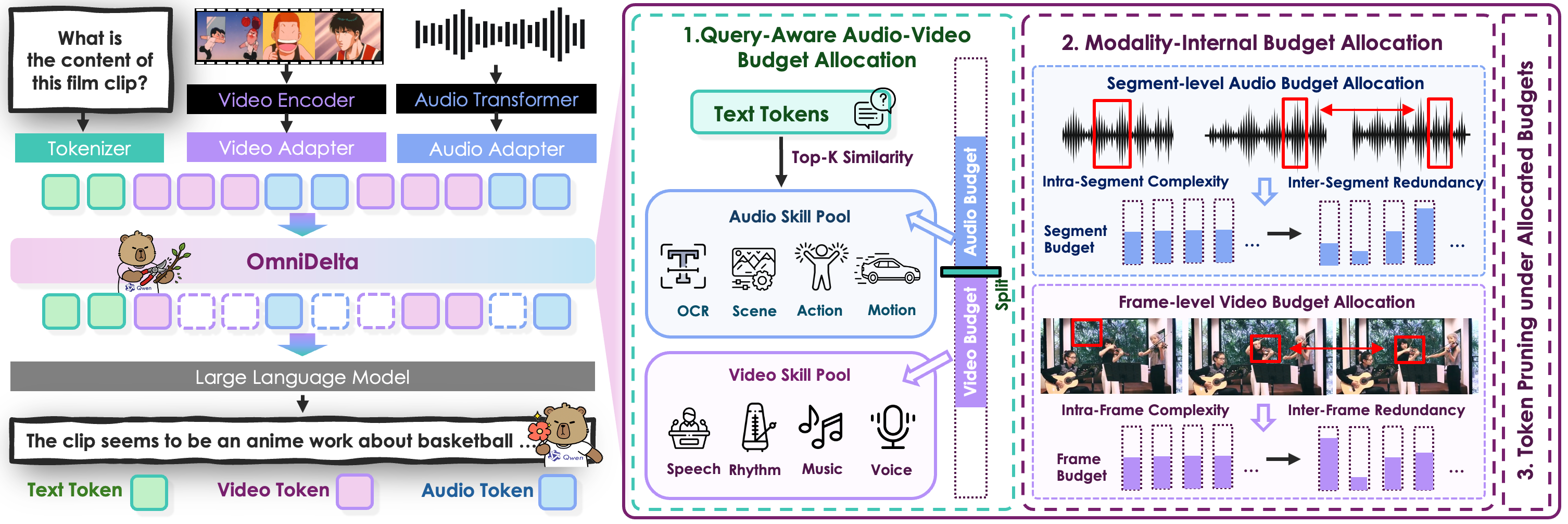}
    \caption{Overview of \method. The method keeps the standard OmniLLM input pipeline and inserts a budget-allocation layer before token selection. It first shifts budget between audio and video using query-skill similarity, then redistributes modality budgets over video frames and audio segments according to local complexity and temporal redundancy.}
    \label{fig:method_overview}
\end{figure}

\subsection{Our Method: \method}
\label{sec:omnidelata}

As shown in Fig.~\ref{fig:method_overview}, \method formulates omni-modal token compression as a hierarchical budget allocation problem.
Given $N_{\mathrm{a}}$ audio tokens and $N_{\mathrm{v}}$ video tokens, we first fix the global retained budget as
\begin{equation}
    K=\operatorname{round}\left(r(N_{\mathrm{a}}+N_{\mathrm{v}})\right),
    \label{eq:total_budget}
\end{equation}
where $r$ is the target retained ratio.
\method decomposes token compression into hierarchical budget allocation followed by token pruning.
It first transfers budget between audio and video according to query semantics, and then redistributes each modality budget across audio segments or video frames.
Given these local budgets, the pruning stage selects the concrete tokens retained within each unit.
We initialize the modality budgets from a fixed compression prior, denoted by $(K_{\mathrm{a}}^0,K_{\mathrm{v}}^0)$ with $K_{\mathrm{a}}^0+K_{\mathrm{v}}^0=K$.
This prior gives a stable starting point, while \method only changes where the fixed budget is spent.

\subsubsection{Intent-Aware Inter-Modal Budget Allocation}
\label{sec:skill_budget}

The top level determines whether the query requires more audio or video evidence.
As shown in Sec.~\ref{sec:similarity_analysis}, direct query-to-audio/video token similarity is not a reliable modality router.
We therefore construct modality skill pools offline with GPT-5.5-xhigh.
From WorldSense, it identifies representative audio-oriented task types, including speech and dialogue, speaker and voice, music and rhythm, and sound-event understanding, as well as video-oriented task types, including object and scene recognition, appearance and spatial relations, OCR, and action and motion understanding.
It extracts diagnostic keywords for these tasks and expands them with semantically related terms.
We denote the resulting audio and video skill pools by $\mathcal{S}_{\mathrm{a}}$ and $\mathcal{S}_{\mathrm{v}}$, respectively; construction details and representative entries are provided in Appendix~\ref{app:skill_pool}.
At inference time, the query and skill entries are compared in the OmniLLM input-embedding space.
Let $\mathbf{E}(\cdot)$ denote the thinker input embedding layer.
For a text string $x$ with token positions $\mathcal{P}_x$, we define its normalized pooled embedding as
\begin{equation}
    \mathbf{e}(x)
    =
    \operatorname{Norm}
    \left(
    \frac{1}{|\mathcal{P}_x|}
    \sum_{i\in\mathcal{P}_x}\mathbf{E}(x_i)
    \right),
    \qquad
    r_m(q)
    =
    \frac{1}{k}
    \sum_{c\in\operatorname{TopK}_k(q,\mathcal{S}_m)}
    \mathbf{e}(q)^{\top}\mathbf{e}(c),
    \quad
    m\in\{\mathrm{a},\mathrm{v}\}.
    \label{eq:skill_score}
\end{equation}
Here $\operatorname{Norm}(\cdot)$ denotes $\ell_2$ normalization, so the inner product equals cosine similarity.
$\operatorname{TopK}_k(q,\mathcal{S}_m)$ selects the $k$ skills in modality $m$ with the highest similarity to the query.
The two modality scores are normalized into probabilities:
\begin{equation}
    [p_{\mathrm{a}},p_{\mathrm{v}}]
    =
    \operatorname{Softmax}
    \left(
    [r_{\mathrm{a}}(q),r_{\mathrm{v}}(q)]
    \right),
    \qquad
    m_q=p_{\mathrm{v}}-p_{\mathrm{a}}.
    \label{eq:skill_probability}
\end{equation}
Here $m_q$ is the signed modality bias.
A positive $m_q$ indicates stronger video demand, while a negative $m_q$ indicates stronger audio demand. We use this signed bias to shift budget from the less relevant modality to the more relevant one:
\begin{equation}
    \Delta_q=\lambda_q K m_q,
    \qquad
    K_{\mathrm{a}}=K_{\mathrm{a}}^0-\Delta_q,
    \qquad
    K_{\mathrm{v}}=K_{\mathrm{v}}^0+\Delta_q.
    \label{eq:modality_budget}
\end{equation}
Here $K$ is the total retained budget, $\lambda_q$ controls the maximum query-driven transfer strength, and $(K_{\mathrm{a}}^0,K_{\mathrm{v}}^0)$ denotes the prior audio/video budget split.
Thus, video-oriented queries move budget from audio to video, audio-oriented queries move budget in the opposite direction, and ambiguous queries produce a small shift.
After range projection and integer correction, the modality budgets still satisfy $K_{\mathrm{a}}+K_{\mathrm{v}}=K$.

\subsubsection{Content-Aware Intra-Modal Budget Allocation}
\label{sec:audio_budget}
\label{sec:video_budget}

After the top-level split, \method performs a finer-grained budget allocation inside each modality.
For audio, the minimum allocation unit is an audio segment, where we group the audio tokens within one second into a segment.
For video, the minimum allocation unit is a single frame.
Since audio segments and video frames follow the same allocation principle, we use a unified notation and omit the modality superscript when there is no ambiguity.
For a modality $m\in\{\mathrm{a},\mathrm{v}\}$, let $u_i$ denote the $i$-th local unit, containing $L_i$ token features $\{\mathbf{x}_{i,j}\}_{j=1}^{L_i}$.
Its mean representation and intra-unit complexity are defined as
\begin{equation}
    \boldsymbol{\mu}_i
    =
    \frac{1}{L_i}
    \sum_{j=1}^{L_i}\mathbf{x}_{i,j},
    \qquad
    C_i^m
    =
    \operatorname{Norm}
    \left(
    1-\frac{1}{L_i}\sum_{j=1}^{L_i}
    \cos(\mathbf{x}_{i,j},\boldsymbol{\mu}_i)
    \right).
    \label{eq:unit_complexity}
\end{equation}
A large $C_i^m$ means that tokens inside the unit are diverse, so the unit should retain more tokens.
We also compute inter-unit redundancy against the previous temporal unit.
Instead of comparing unit means, this score averages cosine similarities between corresponding token positions:
\begin{equation}
    R_i^m
    =
    \begin{cases}
    0, & i=1,\\
    \operatorname{Norm}_{t>1}
    \left(
    \frac{1}{\tilde{L}_i}
    \sum_{j=1}^{\tilde{L}_i}
    \cos\left(\tilde{\mathbf{x}}_{i,j},\tilde{\mathbf{x}}_{i-1,j}\right)
    \right), & i>1.
    \end{cases}
    \label{eq:unit_redundancy}
\end{equation}
Here $\tilde{\mathbf{x}}_{i,j}$ and $\tilde{\mathbf{x}}_{i-1,j}$ are aligned token features after resampling adjacent units to a common length $\tilde{L}_i$, and $\operatorname{Norm}_{t>1}(\cdot)$ denotes min-max normalization over units with a previous neighbor.
A large $R_i^m$ means that the current unit repeats token-level patterns from the previous unit and can be compressed more aggressively.
We combine these two signals into a centered signed score:
\begin{equation}
    z_i^m
    =
    (R_i^m-C_i^m)
    -
    \frac{1}{n_m}\sum_{j=1}^{n_m}(R_j^m-C_j^m)
    .
    \label{eq:unit_score}
\end{equation}
If $z_i^m>0$, the unit is relatively more redundant; if $z_i^m<0$, the unit is relatively more complex.
Let $k_i^{0,m}$ be the prior keep count of unit $u_i$, with $\sum_i k_i^{0,m}=K_m$.
Let $\lambda_m$ be the modality-specific shift coefficient, where $\lambda_{\mathrm{a}}$ and $\lambda_{\mathrm{v}}$ control the strength of local budget adjustment for audio segments and video frames, respectively.
\method adjusts these prior counts by constructing a feasible keep interval for each unit:
\begin{equation}
    \ell_i^m
    =
    k_i^{0,m}-\lambda_m[z_i^m]_+L_i,
    \qquad
    h_i^m
    =
    k_i^{0,m}+\lambda_m[-z_i^m]_+L_i.
    \label{eq:unit_keep_bounds}
\end{equation}
Here $[x]_+=\max(x,0)$.
Thus, redundant units are allowed to reduce their keep counts, while complex units are allowed to increase their keep counts.
Starting from the lower bounds, we distribute the remaining budget according to a keep priority:
\begin{equation}
    \begin{aligned}
    w_i^m&=\frac{1-z_i^m}{2},\\
    k_i^m
    &=
    \ell_i^m+
    \left(K_m-\sum_j\ell_j^m\right)
    \frac{w_i^m(h_i^m-\ell_i^m)}
    {\sum_j w_j^m(h_j^m-\ell_j^m)}.
    \end{aligned}
    \label{eq:unit_keep_allocation}
\end{equation}
The same rule applies to audio segments and video frames.
It assigns more budget to units with high complexity and low redundancy, while preserving $\sum_i k_i^m=K_m$ after rounding and boundary correction.

\begin{table}[!t]
\centering
\caption{WorldSense category results under 25\% and 20\% audio-visual retained-token settings. Best compressed results, excluding Full Tokens, are bolded.}
\label{tab:worldsense_main}
\scriptsize
\setlength{\tabcolsep}{2.5pt}
\resizebox{\paperfloatwidth}{!}{%
\begin{tabular}{L{2.60cm}C{0.55cm}C{0.50cm}*{8}{C{0.82cm}}C{0.50cm}C{0.75cm}}
\toprule
\textbf{Method} & \textbf{Ret.} & & \textbf{Tech} & \textbf{Cult.} & \textbf{Daily} & \textbf{Film} & \textbf{Perf.} & \textbf{Games} & \textbf{Sports} & \textbf{Music} & & \textbf{Avg.} \\
\midrule
\multicolumn{13}{c}{\emph{Qwen2.5-Omni-7B}} \\
\midrule
\rowcolor{FullTokenBlue}
Full Tokens & 100\% & & 52.0 & 51.1 & 48.0 & 44.6 & 43.4 & 42.1 & 42.1 & 47.5 & & 46.8 \\
Random & 25\% & & 42.9 & 41.7 & 40.3 & 37.7 & 36.7 & 36.1 & 38.6 & 44.3 & & 40.2 \\
DyCoke (V\&A) & 25\% & & 46.9 & 40.1 & 42.4 & 38.0 & 39.0 & 38.2 & 40.0 & 44.3 & & 41.7 \\
\base & 25\% & & 47.1 & 46.9 & 43.6 & 39.6 & 40.4 & 39.9 & 40.0 & 45.1 & & 43.2 \\
\base & 20\% & & 46.9 & 46.0 & 41.8 & 40.1 & \textbf{41.6} & \textbf{40.3} & 40.0 & 44.1 & & 42.7 \\
\rowcolor{OursBlue}
\mbox{\method (Ours)} & 25\% & & \textbf{49.0} & \textbf{48.2} & \textbf{46.2} & \textbf{42.5} & 39.7 & 39.1 & 38.8 & \textbf{45.3} & & \textbf{44.2} \\
\rowcolor{OursBlue}
\mbox{\method (Ours)} & 20\% & & 47.1 & 46.3 & 42.9 & 40.9 & 39.3 & \textbf{40.3} & \textbf{40.2} & 44.3 & & 43.0 \\
\midrule
\multicolumn{13}{c}{\emph{Qwen2.5-Omni-3B}} \\
\midrule
\rowcolor{FullTokenBlue}
Full Tokens & 100\% & & 51.6 & 51.8 & 44.5 & 45.6 & 43.4 & 40.8 & 44.0 & 46.3 & & 46.2 \\
Random & 25\% & & 46.1 & 42.7 & 38.4 & 39.1 & 35.6 & 39.9 & 40.0 & 40.4 & & 40.4 \\
DyCoke (V\&A) & 25\% & & 46.5 & 45.0 & 42.1 & 38.5 & 40.1 & 39.1 & 38.4 & 42.6 & & 41.8 \\
\base & 25\% & & \textbf{50.0} & 46.6 & 43.2 & 41.7 & 41.6 & \textbf{44.6} & 39.5 & \textbf{45.1} & & 44.1 \\
\base & 20\% & & 48.8 & 44.3 & 40.9 & 40.6 & 38.6 & 40.8 & 38.6 & 43.8 & & 42.3 \\
\rowcolor{OursBlue}
\mbox{\method (Ours)} & 25\% & & 49.8 & \textbf{48.9} & \textbf{44.2} & \textbf{43.8} & 41.6 & 43.3 & \textbf{40.9} & 43.8 & & \textbf{44.7} \\
\rowcolor{OursBlue}
\mbox{\method (Ours)} & 20\% & & 48.0 & 45.6 & 43.2 & 42.0 & \textbf{42.3} & 43.3 & 37.2 & 43.8 & & 43.2 \\
\bottomrule
\end{tabular}
}
\end{table}

\subsubsection{Token Pruning under Allocated Budgets}
\label{sec:budget}

The previous levels determine how many tokens each local unit should retain.
Following \base~\citep{tao2025omnizip}, the pruning stage then selects concrete tokens under these local budgets.
For audio, token importance is estimated from the final self-attention of the audio encoder:
\begin{equation}
    \mathbf{A}
    =
    \operatorname{Softmax}
    \left(
    \frac{\mathbf{Q}_{\mathrm{a}}\mathbf{K}_{\mathrm{a}}^{\top}}{\sqrt{d}}
    \right),
    \qquad
    \alpha_i
    =
    \frac{1}{N_{\mathrm{a}}}
    \sum_{j=1}^{N_{\mathrm{a}}}\mathbf{A}_{j,i}.
    \label{eq:audio_attention_score}
\end{equation}
Within each audio segment, tokens with higher attention scores $\alpha_i$ are retained first.
The retained audio mask is then aggregated over temporally aligned audio chunks, where each chunk corresponds to 2 seconds of audio tokens.
The chunk-level retention ratios provide an audio-guided prior for the corresponding video windows.

For video, pruning is performed within local time windows, each corresponding to four video frames, using interleaved spatio-temporal compression (ISTC).
ISTC alternates temporal pruning, which removes same-position tokens with high similarity across adjacent frames, and spatial pruning, which uses DPC-KNN to preserve representative tokens within each frame.
These pruning rules provide one concrete token-selection backend: they decide which tokens to keep under the allocated budgets, while \method determines the budgets themselves and can be combined with other pruning strategies.

\section{Experiments}
\label{sec:experiments}

\subsection{Experimental Setting}
\label{sec:experimental_setting}

\begin{wrapfigure}{r}{0.5\textwidth}
    \vspace{-1em}
    \centering
    \includegraphics[width=\linewidth]{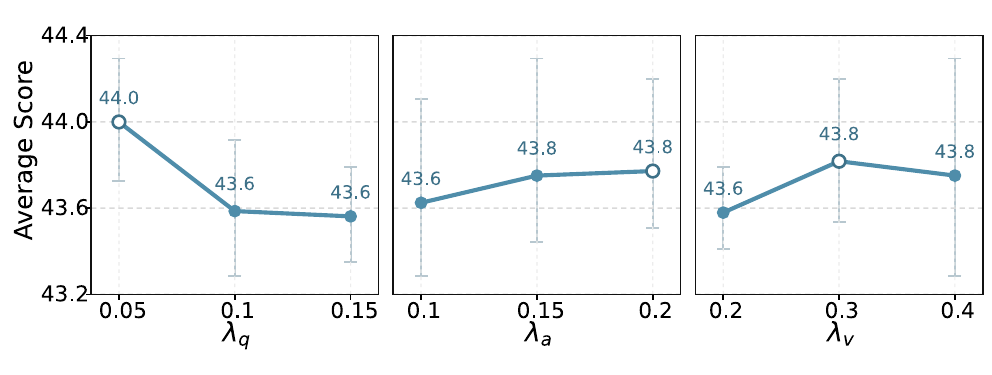}
    \vspace{-2em}
    \caption{Shift-ratio sensitivity on WorldSense with Qwen2.5-Omni-7B. Each point averages over the remaining two coefficients, and the shaded error bar denotes the min--max range.}
    \label{fig:lambda_sensitivity}
\end{wrapfigure}

\paragraph{Benchmarks.}
We evaluate \method on four audio-video QA benchmarks: WorldSense~\citep{hong2026worldsense}, AVUT~\citep{yang2025avut}, VideoMME with audio~\citep{fu2024videomme}, and DailyOmni~\citep{zhou2025dailyomni}.
WorldSense contains 3,172 QA pairs across eight real-world domains, while AVUT is an audio-centric benchmark covering event localization, object/OCR matching, information extraction, content counting, and character matching.
VideoMME evaluates perception and temporal reasoning over short, medium, and long videos, and DailyOmni focuses on temporally aligned audio-video reasoning in daily-life scenarios.

\paragraph{Comparison Methods.}
We compare \method with the full-token Qwen2.5-Omni baseline and three compression baselines: Random pruning, which uniformly drops audio and video tokens; DyCoke~\citep{tao2025dycoke}, whose TTM module is applied to both modalities independently; and \base~\citep{tao2025omnizip}, an audio-guided OmniLLM compressor that uses audio saliency to guide video pruning.

\paragraph{Implementation Details.}
All experiments use Qwen2.5-Omni-7B and Qwen2.5-Omni-3B~\citep{qwen2025qwen25omni} on NVIDIA H20 GPUs, with a maximum of 512 frames for VideoMME and 128 frames for the other benchmarks.
Each audio segment contains 25 audio tokens, and each video frame contains 72 video tokens.
Under the 25\% retained-token setting, the prior budget is initialized from the \base allocation with a 50\% audio keep rate and a 20\% video keep rate.
We set $(\lambda_q,\lambda_a,\lambda_v)$ to $(0.05,0.20,0.30)$ for the 7B model and $(0.15,0.10,0.40)$ for the 3B model.

\begin{table}[!t]
\centering
\caption{Results on AVUT, VideoMME, and DailyOmni under 25\% and 20\% retained-token settings. VideoMME and DailyOmni include their internal splits; AVUT is reported by overall accuracy. Best compressed results, excluding Full Tokens, are bolded.}
\label{tab:multi_benchmark_main}
\scriptsize
\setlength{\tabcolsep}{2.2pt}
\resizebox{\paperfloatwidth}{!}{%
\begin{tabular}{L{2.16cm}C{0.66cm}C{0.08cm}C{0.58cm}C{0.20cm}C{0.62cm}C{0.56cm}C{0.58cm}C{0.16cm}C{0.58cm}C{0.30cm}C{0.58cm}C{0.56cm}C{0.56cm}C{0.58cm}C{0.56cm}C{0.56cm}C{0.16cm}C{0.58cm}}
\toprule
\multirow{2}{*}{\textbf{Method}} & \multirow{2}{*}{\textbf{Ret.}} & & \multirow{2}{*}{\textbf{AVUT}} & & \multicolumn{5}{c}{\textbf{VideoMME}} & & \multicolumn{8}{c}{\textbf{DailyOmni}} \\
\cmidrule(lr){6-10}\cmidrule(lr){12-19}
& & & & & \textbf{Short} & \textbf{Med.} & \textbf{Long} & & \textbf{Avg.} & & \textbf{Con.} & \textbf{Evt.} & \textbf{AV-E} & \textbf{Com.} & \textbf{Inf.} & \textbf{Rea.} & & \textbf{Avg.} \\
\midrule
\multicolumn{19}{c}{\emph{Qwen2.5-Omni-7B}} \\
\midrule
\rowcolor{FullTokenBlue}
Full Tokens & 100\% & & 63.8 & & 77.4 & 68.8 & 55.7 & & 67.3 & & 58.5 & 56.9 & 48.7 & 70.2 & 79.2 & 76.6 & & 62.7 \\
Random & 25\% & & 56.5 & & 72.3 & 64.8 & 54.7 & & 63.9 & & 48.2 & 46.1 & 38.2 & 61.1 & 68.8 & 65.7 & & 52.3 \\
DyCoke (V\&A) & 25\% & & 57.2 & & 71.4 & 63.9 & 54.1 & & 63.1 & & 46.1 & 46.4 & 42.4 & 64.9 & 70.1 & 71.4 & & 54.3 \\
\base & 25\% & & 60.7 & & \textbf{74.9} & 68.1 & \textbf{56.0} & & 66.3 & & 48.7 & 47.7 & 44.1 & \textbf{68.7} & 79.2 & 72.0 & & 57.1 \\
\base & 20\% & & 59.9 & & 74.2 & 66.6 & 55.0 & & 65.3 & & 45.6 & 48.4 & 43.3 & 66.4 & 77.3 & 71.4 & & 56.0 \\
\rowcolor{OursBlue}
\mbox{\method (Ours)} & 25\% & & \textbf{61.1} & & 74.7 & \textbf{68.6} & \textbf{56.0} & & \textbf{66.4} & & \textbf{50.8} & 49.3 & \textbf{44.5} & \textbf{68.7} & \textbf{79.9} & \textbf{75.4} & & \textbf{58.5} \\
\rowcolor{OursBlue}
\mbox{\method (Ours)} & 20\% & & 59.9 & & 73.7 & 67.7 & 55.8 & & 65.7 & & 49.2 & \textbf{50.3} & 42.0 & 64.9 & 77.9 & 73.1 & & 57.0 \\
\midrule
\multicolumn{19}{c}{\emph{Qwen2.5-Omni-3B}} \\
\midrule
\rowcolor{FullTokenBlue}
Full Tokens & 100\% & & 61.6 & & 74.6 & 64.9 & 52.3 & & 63.9 & & 53.9 & 52.9 & 51.7 & 67.2 & 76.6 & 71.4 & & 60.2 \\
Random & 25\% & & 54.8 & & 68.7 & 60.0 & 49.4 & & 59.4 & & 46.6 & 44.8 & 42.0 & 60.3 & 66.2 & 59.4 & & 51.1 \\
DyCoke (V\&A) & 25\% & & 55.1 & & 67.8 & 59.7 & 49.8 & & 59.1 & & 45.6 & 43.8 & 41.6 & 61.1 & 64.9 & 62.9 & & 51.0 \\
\base & 25\% & & \textbf{58.9} & & 72.0 & 61.6 & \textbf{52.0} & & 61.9 & & 48.2 & \textbf{46.4} & 46.6 & 64.9 & 72.1 & 68.0 & & 55.2 \\
\base & 20\% & & 56.1 & & 71.1 & 60.9 & 51.7 & & 61.2 & & 48.2 & 43.8 & 43.7 & 65.6 & 73.4 & 68.0 & & 54.2 \\
\rowcolor{OursBlue}
\mbox{\method (Ours)} & 25\% & & 58.8 & & \textbf{72.2} & \textbf{63.6} & 51.4 & & \textbf{62.4} & & 49.2 & 44.1 & \textbf{47.9} & \textbf{67.2} & \textbf{74.7} & \textbf{70.9} & & \textbf{56.1} \\
\rowcolor{OursBlue}
\mbox{\method (Ours)} & 20\% & & 57.0 & & 70.8 & 62.6 & 51.0 & & 61.4 & & \textbf{49.7} & 45.4 & 45.8 & \textbf{67.2} & 72.1 & 66.9 & & 55.1 \\
\bottomrule
\end{tabular}
}
\end{table}

\subsection{Main Results}
\label{sec:main_results}

\paragraph{Comparison with Baselines.}
Tables~\ref{tab:worldsense_main} and~\ref{tab:multi_benchmark_main} compare \method with full-token inference and compressed baselines.
Random pruning and DyCoke drop sharply under both 25\% and 20\% retained-token budgets, while \base is stronger but still suffers from aggressive compression, indicating that a fixed audio/video prior and window-local frame budgets can misplace scarce tokens when the required evidence varies across queries and samples.
\method improves the WorldSense average over \base on both model sizes and retained ratios, from 43.2 to 44.2 at 25\% and from 42.7 to 43.0 at 20\% for the 7B model, and from 44.1 to 44.7 at 25\% and from 42.3 to 43.2 at 20\% for the 3B model.
On AVUT, VideoMME, and DailyOmni, it also gives the best or competitive compressed aggregate results across most settings, including the strongest 7B averages on all three benchmarks at 25\% and the strongest 3B VideoMME and DailyOmni averages at 25\%.
These results show that query-guided, fine-grained budget redistribution is complementary to token selection: before deciding which tokens to keep, \method improves where the limited audio-visual budget is spent.

\begin{wrapfigure}{r}{0.5\textwidth}
    \centering
    \includegraphics[width=\linewidth]{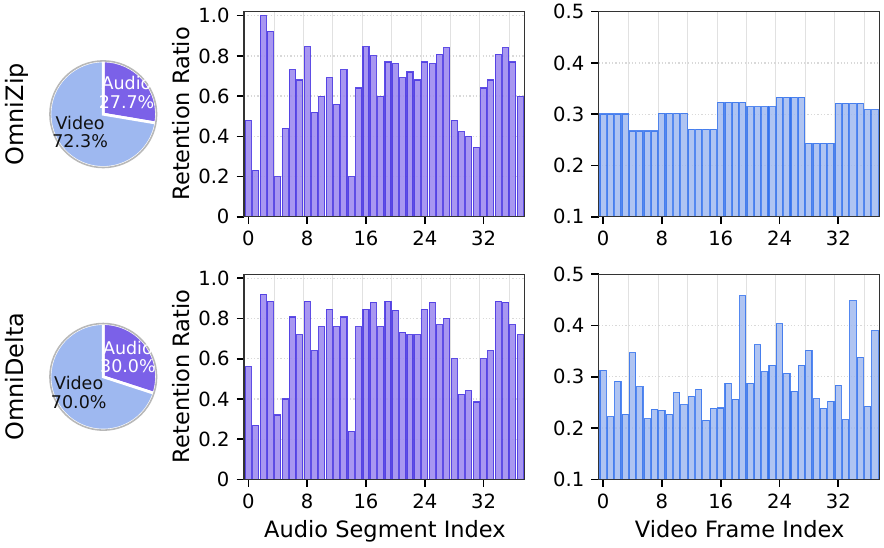}
    \caption{Visualization of budget allocation. \method assigns non-uniform budgets across audio segments and video frames, retaining more tokens in informative temporal regions and compressing redundant regions more aggressively.}
    \label{fig:budget_visualization}
    \vspace{0em}
\end{wrapfigure}

\paragraph{Shift-Ratio Sensitivity.}
Figure~\ref{fig:lambda_sensitivity} studies the three shift coefficients on WorldSense with Qwen2.5-Omni-7B.
The sweep uses $\lambda_q\in\{0.05,0.10,0.15\}$, $\lambda_a\in\{0.10,0.15,0.20\}$, and $\lambda_v\in\{0.20,0.30,0.40\}$, producing 27 configurations in total.
Across this grid, the average accuracy varies within a narrow range of 43.28\%--44.29\%.
Among the three coefficients, the query-level shift is the most sensitive component, indicating that inter-modal transfer strength should be calibrated more carefully than the intra-modal shifts.
The audio and video internal shifts are more stable, with moderate video shifts peaking around $\lambda_v=0.30$ on average and audio shifts showing a flatter response.
Overall, the sweep supports a bounded-shift design: performance improves when the transfer range is calibrated, and all tested settings remain above the 43.2\% \base average in Table~\ref{tab:worldsense_main}.

\paragraph{Budget Allocation Visualization.}
Figure~\ref{fig:budget_visualization} provides a case-level view of the budget distribution produced by \method.
The audio branch allocates different keep budgets to different temporal segments, while the video branch further redistributes the visual budget at the frame level rather than keeping a fixed budget inside each time window.
This adaptive allocation concentrates tokens around segments and frames that are more likely to contain task-relevant evidence, and assigns fewer tokens to redundant or less informative intervals.
The visualization illustrates that \method improves compression not by changing the downstream pruning rule, but by placing the limited audio-video budget more precisely before token selection.

\subsection{Ablation Study}
\label{sec:ablation}

\begin{table*}[!t]
\centering
\begin{minipage}[t]{0.525\textwidth}
\centering
\caption{Ablation of the three budget-allocation levels on Qwen2.5-Omni-7B under the 25\% retained-token setting. A/V, A, and V denote audio-video, audio-internal, and video-internal budget allocation. Dataset columns correspond to WorldSense, AVUT, VideoMME, and DailyOmni; values in parentheses are deltas from the baseline.}
\label{tab:ablation_budget}
\scriptsize
\setlength{\tabcolsep}{0.8pt}
\begin{tabularx}{\linewidth}{@{}L{1.38cm} *{3}{C{0.34cm}} *{5}{>{\centering\arraybackslash}X}@{}}
\toprule
\multirow{2}{*}{\textbf{ID}} & \multicolumn{3}{c}{\textbf{Setting}} & \multicolumn{4}{c}{\textbf{Dataset}} & \multirow{2}{*}{\textbf{Avg.}} \\
\cmidrule(lr){2-4}\cmidrule(lr){5-8}
& \textbf{A/V} & \textbf{A} & \textbf{V} & \textbf{WS} & \textbf{AVUT} & \textbf{VMME} & \textbf{Daily} & \\
\midrule
\rowcolor{FullTokenBlue}
Baseline & \textbf{\ding{55}} & \textbf{\ding{55}} & \textbf{\ding{55}} & 43.2 & 60.7 & 66.3 & 57.1 & 56.8 \\
\multirow{4}{*}{\mbox{\method}} & \textbf{\ding{51}} & \textbf{\ding{55}} & \textbf{\ding{55}} & 44.0{\tiny(+0.8)} & 60.3{\tiny(-0.4)} & 65.8{\tiny(-0.5)} & 57.4{\tiny(+0.3)} & 56.9{\tiny(+0.1)} \\
& \textbf{\ding{51}} & \textbf{\ding{51}} & \textbf{\ding{55}} & 43.8{\tiny(+0.6)} & 60.6{\tiny(-0.1)} & 66.3{\tiny(+0.0)} & 57.6{\tiny(+0.5)} & 57.1{\tiny(+0.3)} \\
& \textbf{\ding{51}} & \textbf{\ding{55}} & \textbf{\ding{51}} & 44.1{\tiny(+0.9)} & 60.7{\tiny(+0.0)} & 66.6{\tiny(+0.3)} & 57.8{\tiny(+0.7)} & 57.3{\tiny(+0.5)} \\
\rowcolor{OursBlue}
& \textbf{\ding{51}} & \textbf{\ding{51}} & \textbf{\ding{51}} & 44.2{\tiny(+1.0)} & 61.1{\tiny(+0.4)} & 66.4{\tiny(+0.1)} & 58.5{\tiny(+1.4)} & 57.6{\tiny(+0.7)} \\
\bottomrule
\end{tabularx}
\end{minipage}
\hfill
\begin{minipage}[t]{0.465\textwidth}
\centering
\caption{Actual inference efficiency comparison on WorldSense. Time to first token (TTFT) and end-to-end latency are measured per example.}
\label{tab:efficiency_analysis}
\scriptsize
\setlength{\tabcolsep}{0.2pt}
\begin{tabularx}{\linewidth}{@{}l X c c X c c@{}}
\toprule
\textbf{Method} & & \textbf{GPU Mem.(G)} $\downarrow$ & \textbf{TTFT (ms)} $\downarrow$ & & \textbf{Acc.} $\uparrow$ & \textbf{E2E Lat. (s)} $\downarrow$ \\
\midrule
\multicolumn{7}{c}{\emph{Qwen2.5-Omni-7B}} \\
\midrule
\rowcolor{FullTokenBlue}
Full Tokens & & 30.0 & 3006\,(1.00$\times$) & & 46.8 & 3.15\,(1.00$\times$) \\
DyCoke 50\% & & 25.8 & 2091\,(1.44$\times$) & & 44.0 & 2.24\,(1.41$\times$) \\
\base 30\% & & 24.0 & 1809\,(1.66$\times$) & & 44.1 & 1.96\,(1.61$\times$) \\
\rowcolor{OursBlue}
\mbox{\method{} 25\%} & & \textbf{23.4} & \textbf{1776\,(1.69$\times$)} & & \textbf{44.2} & \textbf{1.92\,(1.64$\times$)} \\
\midrule
\multicolumn{7}{c}{\emph{Qwen2.5-Omni-3B}} \\
\midrule
\rowcolor{FullTokenBlue}
Full Tokens & & 18.6 & 2190\,(1.00$\times$) & & 46.2 & 2.25\,(1.00$\times$) \\
DyCoke 50\% & & 15.1 & 1680\,(1.30$\times$) & & 44.5 & 1.74\,(1.29$\times$) \\
\base 30\% & & 13.8 & 1567\,(1.40$\times$) & & 44.5 & 1.63\,(1.38$\times$) \\
\rowcolor{OursBlue}
\mbox{\method{} 25\%} & & \textbf{13.3} & \textbf{1538\,(1.42$\times$)} & & \textbf{44.7} & \textbf{1.60\,(1.41$\times$)} \\
\bottomrule
\end{tabularx}
\end{minipage}
\end{table*}

Table~\ref{tab:ablation_budget} studies the contribution of each budget-allocation level.
The top-level audio-video allocation, audio-internal allocation, and video-internal allocation all improve the model from different perspectives: they adjust the modality split, redistribute audio evidence across temporal segments, and refine visual budgets across frames.
Combining the three modules gives the best overall average, indicating that the hierarchical design is effective and that the three allocation levels are complementary.

\subsection{Efficiency Analysis}
\label{sec:efficiency_analysis}

We further evaluate actual GPU memory consumption, time to first token (TTFT), and end-to-end latency on WorldSense.
As shown in Table~\ref{tab:efficiency_analysis}, \method substantially reduces inference cost compared with full-token inference while preserving strong accuracy.
On Qwen2.5-Omni-7B, it reduces GPU memory from 30.0G to 23.4G and achieves a 1.69$\times$ TTFT speedup together with a 1.64$\times$ end-to-end latency speedup.
On Qwen2.5-Omni-3B, it also obtains the lowest memory usage and latency among compressed methods, with a 1.42$\times$ TTFT speedup and a 1.41$\times$ latency speedup.
Among the compressed baselines, \method achieves the highest accuracy while also providing the largest acceleration, indicating that the proposed budget allocation offers a better trade-off between practical efficiency and model performance.

\section{Related Work}

\subsection{Omni-Modal Large Language Models}

Omni-modal large language models (OmniLLMs) extend multimodal LLMs from image/video-text inputs to unified processing of text, vision, speech, and audio.
Recent video and audio-video LLMs have strengthened temporal reasoning and audio-aware video understanding \citep{cheng2024videollama2,sun2024videosalmonn,zhang2024llavavideo,tang2025videosalmonn2}.
OmniLLMs further integrate heterogeneous inputs into one autoregressive interface, with recent Omni-series and open-source omni-modal models demonstrating strong audio-video interaction capabilities \citep{fu2024vita,li2024baichuanomni,liu2025ola,inclusion2025mingomni,tong2025interactiveomni,ye2025omnivinci,qwen2025qwen25omni,qwen2025qwen3omni}.
This unified sequence lets the model combine complementary visual and acoustic evidence, but long videos and dense audio streams also dominate prefill computation and KV-cache memory.
Efficient OmniLLM inference therefore requires compressing audio-video tokens while preserving cross-modal correspondence.

\subsection{Token Pruning for OmniLLMs}

Token pruning lowers multimodal inference cost by removing redundant perceptual tokens before the LLM.
Image methods use token merging, early-layer filtering, and sparsification~\citep{bolya2022tome,chen2024fastv,shang2024llavaprumerge,zhang2024sparsevlm,ye2024fitprune}, video methods exploit temporal saliency, frame similarity, and hierarchical spatio-temporal structure~\citep{huang2025prunevid,tao2025dycoke,fu2025framefusion,shen2024longvu,guo2026hieravid}, and audio methods use adjacent merging, importance pruning, and context-aware pruning~\citep{li2023adjacentmerging,lee2025audiopruning,lin2025speechprune}.
They reveal substantial perceptual redundancy but usually operate within a single stream under a fixed budget.

OmniLLM compressors extend pruning across audio and video: training-free \base uses audio attention to guide video pruning~\citep{tao2025omnizip}; OmniSIFT and EchoingPixels learn cross-modal selectors with straight-through estimators and end-to-end training~\citep{ding2026omnisift,gong2025echoingpixels}; FastAV, AccKV, and DASH compress audio-video tokens, KV caches, or semantic chunks within predefined pipelines~\citep{jung2026fastav,jiang2026acckv,li2026dash}; and OmniRefine performs cooperative pruning over aligned units~\citep{deng2026omnirefine}.
OmniFit follows a fixed layer-wise budget schedule independent of query demand~\citep{wang2026omnifit}, while OmniSelect uses a costly auxiliary AudioCLIP module to estimate modality relevance from direct cosine similarity between the query and audio/video tokens and adjust pruning~\citep{yang2026omniselect,guzhov2021audioclip}.
These methods primarily decide which tokens, chunks, or caches to retain.
\method instead addresses the preceding allocation problem by distributing a fixed budget across modalities, audio segments, and video frames, conditioning the inter-modal split on the query while remaining compatible with existing token-selection rules.

\section{Conclusion}

We proposed \method, a training-free hierarchical budget allocation framework for OmniLLM token compression.
Instead of only selecting important tokens, \method decides where a fixed retained-token budget should be spent, using query-skill similarity for audio-video budget shifting and local redundancy/complexity for audio segment and video frame allocation, while remaining compatible with existing pruning strategies.
Experiments on four audio-video benchmarks with Qwen2.5-Omni-7B and Qwen2.5-Omni-3B show that \method achieves the best compressed accuracy, reduces GPU memory, and provides the largest inference acceleration, including a 1.64$\times$ end-to-end speedup.
These results indicate that budget allocation before token selection is an effective direction for efficient omni-modal inference.

\bibliographystyle{iclr2026_conference}
\bibliography{iclr2026_conference}

\clearpage
\appendix

\begin{center}
    {\Large\bfseries Appendix}
\end{center}

\setcounter{equation}{0}
\renewcommand{\theequation}{\Roman{equation}}
\setcounter{table}{0}
\renewcommand{\thetable}{\Roman{table}}

This appendix supplements the main paper as follows:
\begin{itemize}[leftmargin=*]
    \item Sec.~\ref{app:algorithm} provides the pseudocode for dynamic budget allocation in \method.
    \item Sec.~\ref{app:theory} establishes the boundedness, feasibility, and conservation properties of the allocator.
    \item Sec.~\ref{app:complexity} analyzes its computational and memory complexity.
    \item Sec.~\ref{app:motivation_details} provides prompts, examples, and additional results for the two diagnostic studies in Sec.~\ref{sec:motivation}.
    \item Sec.~\ref{app:skill_pool} describes the construction of the modality skill pools and presents representative entries.
\end{itemize}

\section{Detailed Algorithm}
\label{app:algorithm}

Algorithm~\ref{alg:omnidelata} summarizes how \method distributes a fixed retained-token budget between modalities and then among their temporal units.

\begin{algorithm}[H]
\caption{Dynamic Budget Allocation}
\label{alg:omnidelata}
\KwIn{Query $q$; audio/video skill pools $\mathcal{S}_{\mathrm{a}},\mathcal{S}_{\mathrm{v}}$; projected tokens $\mathbf{A},\mathbf{V}$; target retained ratio $r$; prior modality budgets $(K_{\mathrm{a}}^0,K_{\mathrm{v}}^0)$; shift coefficients $\lambda_q,\lambda_{\mathrm{a}},\lambda_{\mathrm{v}}$.}
\KwOut{Modality budgets $(K_{\mathrm{a}},K_{\mathrm{v}})$ and local keep counts $\{k_i^m\}$ satisfying $\sum_{m,i}k_i^m=K$.}
$K\leftarrow\operatorname{round}\!\left(r(N_{\mathrm{a}}+N_{\mathrm{v}})\right)$\;
\tcp{Stage 1: Query-aware inter-modal budget allocation}
Encode $q$ and all skills with mean-pooled, $\ell_2$-normalized thinker input embeddings $\mathbf{e}(\cdot)$\;
\ForEach{$m\in\{\mathrm{a},\mathrm{v}\}$}{
    $r_m\leftarrow k^{-1}\sum_{c\in\operatorname{TopK}_k(q,\mathcal{S}_m)}\mathbf{e}(q)^\top\mathbf{e}(c)$\;
}
$[p_{\mathrm{a}},p_{\mathrm{v}}]\leftarrow\operatorname{Softmax}([r_{\mathrm{a}},r_{\mathrm{v}}])$, $m_q\leftarrow p_{\mathrm{v}}-p_{\mathrm{a}}$, $\Delta_q\leftarrow\lambda_qKm_q$\;
$(K_{\mathrm{a}},K_{\mathrm{v}})\leftarrow\operatorname{ProjectExact}(K_{\mathrm{a}}^0-\Delta_q,K_{\mathrm{v}}^0+\Delta_q;K)$\;
\tcp{Stage 2: Fine-grained intra-modal budget allocation}
Partition $\mathbf{A}$ into one-second segments and $\mathbf{V}$ into individual frames, producing units $\{u_i^m\}_{i=1}^{n_m}$\;
\ForEach{$m\in\{\mathrm{a},\mathrm{v}\}$}{
    $\{k_i^{0,m}\}\leftarrow\operatorname{PriorLocalBudget}(m,K_m)$ such that $\sum_i k_i^{0,m}=K_m$\;
    \ForEach{$u_i^m=\{\mathbf{x}_{i,j}\}_{j=1}^{L_i}$}{
        $\boldsymbol{\mu}_i\leftarrow L_i^{-1}\sum_j\mathbf{x}_{i,j}$, $C_i^m\leftarrow\operatorname{Norm}\!\left(1-L_i^{-1}\sum_j\cos(\mathbf{x}_{i,j},\boldsymbol{\mu}_i)\right)$\;
        $R_i^m\leftarrow 0$ if $i=1$; otherwise $R_i^m\leftarrow\operatorname{Norm}(\operatorname{AlignedCos}(u_{i-1}^m,u_i^m))$\;
    }
    $z_i^m\leftarrow\operatorname{Clip}\!\left((R_i^m-C_i^m)-n_m^{-1}\sum_j(R_j^m-C_j^m),-1,1\right)$ for all $i$\;
    \ForEach{$i\in\{1,\ldots,n_m\}$}{
        $\ell_i^m\leftarrow k_i^{0,m}-\lambda_m[z_i^m]_+L_i$, $h_i^m\leftarrow k_i^{0,m}+\lambda_m[-z_i^m]_+L_i$, $w_i^m\leftarrow(1-z_i^m)/2$\;
    }
    $\hat{k}_i^m\leftarrow\ell_i^m+\left(K_m-\sum_j\ell_j^m\right)\frac{w_i^m(h_i^m-\ell_i^m)}{\sum_jw_j^m(h_j^m-\ell_j^m)}$\;
    $\{k_i^m\}\leftarrow\operatorname{LargestRemainder}(\{\hat{k}_i^m\},\{\ell_i^m,h_i^m\},K_m)$\;
}
\KwRet{$K_{\mathrm{a}},K_{\mathrm{v}},\{k_i^{\mathrm{a}}\},\{k_i^{\mathrm{v}}\}$}\;
\end{algorithm}

Here, $\operatorname{ProjectExact}$ projects the modality budgets into their feasible ranges and performs integer correction while preserving $K_{\mathrm{a}}+K_{\mathrm{v}}=K$.
$\operatorname{PriorLocalBudget}$ initializes audio-segment budgets from audio attention and video-frame budgets from the audio-guided temporal prior.
$\operatorname{AlignedCos}$ averages the cosine similarities between corresponding token representations in adjacent units.
$\operatorname{LargestRemainder}$ converts the continuous allocations into bounded integer counts whose sum is exactly $K_m$.

\section{Theoretical Properties of Dynamic Budget Allocation}
\label{app:theory}

We analyze the two-stage allocator independently of its downstream pruning operator.
The results below establish that query-aware transfer is smooth and bounded, local redistribution follows the intended direction, and integer correction preserves the exact retained-token budget.

\subsection{Bounded and Monotonic Inter-Modal Transfer}

Let $g=r_{\mathrm{v}}-r_{\mathrm{a}}$ denote the difference between the video and audio skill scores.
Because the router applies a two-class Softmax, its signed modality bias has the closed form
\begin{equation}
    m_q
    =p_{\mathrm{v}}-p_{\mathrm{a}}
    =\frac{e^{r_{\mathrm{v}}}-e^{r_{\mathrm{a}}}}
    {e^{r_{\mathrm{v}}}+e^{r_{\mathrm{a}}}}
    =\tanh\left(\frac{g}{2}\right).
    \label{eq:appendix_softmax_margin}
\end{equation}

\begin{proposition}[Bounded and monotonic query transfer]
For $K>0$ and $\lambda_q>0$, before range projection and integer correction, the query-driven transfer
$\Delta_q=\lambda_qK\tanh(g/2)$ is an odd and strictly increasing function of $g$ satisfying
\begin{equation}
    |\Delta_q|\leq\lambda_qK,
    \qquad
    \widetilde K_{\mathrm{a}}+\widetilde K_{\mathrm{v}}=K,
    \label{eq:appendix_query_bounds}
\end{equation}
where $\widetilde K_{\mathrm{a}}=K_{\mathrm{a}}^0-\Delta_q$ and
$\widetilde K_{\mathrm{v}}=K_{\mathrm{v}}^0+\Delta_q$.
\end{proposition}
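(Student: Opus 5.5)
The plan is to first establish the closed form of $m_q$ in Eq.~\eqref{eq:appendix_softmax_margin}, and then read every claimed property off elementary facts about $\tanh$.

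For the closed form, write the two-class Softmax outputs as $p_{\mathrm{v}}=e^{r_{\mathrm{v}}}/(e^{r_{\mathrm{a}}}+e^{r_{\mathrm{v}}})$ and $p_{\mathrm{a}}=e^{r_{\mathrm{a}}}/(e^{r_{\mathrm{a}}}+e^{r_{\mathrm{v}}})$. Multiply the numerator and denominator of $p_{\mathrm{v}}-p_{\mathrm{a}}$ by $e^{-(r_{\mathrm{a}}+r_{\mathrm{v}})/2}$. This gives
\[
m_q=\frac{e^{g/2}-e^{-g/2}}{e^{g/2}+e^{-g/2}}=\tanh(g/2),
\]
where $g=r_{\mathrm{v}}-r_{\mathrm{a}}$. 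Substituting into Eq.~\eqref{eq:modality_budget} then yields $\Delta_q=\lambda_qK\tanh(g/2)$, regarded as a function of the single real variable $g$.

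Each property follows from a standard fact about $\tanh$:
\begin{itemize}
\item \emph{Oddness.} $\tanh(-x)=-\tanh(x)$, since swapping the roles of the two exponentials flips the sign of the numerator. Multiplying by the constant $\lambda_qK$ preserves oddness.
\item \emph{Strict monotonicity.} Differentiate: $\frac{d}{dg}\Delta_q=\frac{\lambda_qK}{2}\operatorname{sech}^2(g/2)$. This is strictly positive because $\lambda_q>0$, $K>0$, and $\operatorname{sech}^2>0$ everywhere, so $\Delta_q$ is strictly increasing on $\mathbb{R}$.
\item \emph{Boundedness.} $|\tanh(x)|<1$ for every real $x$, since $|e^{x}-e^{-x}|<e^{x}+e^{-x}$ when both terms are positive. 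Hence $|\Delta_q|<\lambda_qK$, which in particular gives the stated non-strict bound.
\item \emph{Conservation.} Add the two definitions: $\widetilde K_{\mathrm{a}}+\widetilde K_{\mathrm{v}}=K_{\mathrm{a}}^0+K_{\mathrm{v}}^0-\Delta_q+\Delta_q=K$, using the prior constraint $K_{\mathrm{a}}^0+K_{\mathrm{v}}^0=K$ from Sec.~\ref{sec:omnidelata}.
\end{itemize}

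There is no real obstacle. The only step needing care is the algebra in the closed form, specifically the symmetric rescaling by $e^{-(r_{\mathrm{a}}+r_{\mathrm{v}})/2}$. We should also note explicitly that the scores $r_m$ are finite, being averages of cosine similarities in $[-1,1]$. Consequently $g\in[-2,2]$, and we could even sharpen the bound to $|\Delta_q|\le\lambda_qK\tanh(1)$, though this is not required.
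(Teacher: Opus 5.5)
Your proposal is correct and follows essentially the same route as the paper: the paper states the closed form $m_q=\tanh(g/2)$ just before the proposition, reads oddness, strict monotonicity, and $|\Delta_q|\leq\lambda_qK$ off the properties of $\tanh$, and gets conservation because the same $\Delta_q$ is subtracted from $K_{\mathrm{a}}^0$ and added to $K_{\mathrm{v}}^0$. Your explicit derivation of the closed form, the derivative check, and the sharpened bound $|\Delta_q|\le\lambda_qK\tanh(1)$ (from $g\in[-2,2]$) are all valid, though not needed for the statement.
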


\begin{proof}
The hyperbolic tangent is odd, strictly increasing, and bounded in $(-1,1)$, which gives the first property and the bound on $\Delta_q$.
Since the same quantity is subtracted from the audio prior and added to the video prior,
$\widetilde K_{\mathrm{a}}+\widetilde K_{\mathrm{v}}=K_{\mathrm{a}}^0+K_{\mathrm{v}}^0=K$.
The subsequent $\operatorname{ProjectExact}$ operation only enforces feasible modality ranges and integer counts while preserving this sum.
\end{proof}

The transfer is also stable with respect to changes in the score gap.
Since
\begin{equation}
    \left|\frac{\partial\Delta_q}{\partial g}\right|
    =\frac{\lambda_qK}{2}\operatorname{sech}^2\left(\frac{g}{2}\right)
    \leq\frac{\lambda_qK}{2},
    \label{eq:appendix_query_lipschitz}
\end{equation}
the mean value theorem gives the following result.

\begin{corollary}[Stability of query routing]
For any two score gaps $g_1$ and $g_2$,
\begin{equation}
    |\Delta_q(g_1)-\Delta_q(g_2)|
    \leq\frac{\lambda_qK}{2}|g_1-g_2|.
    \label{eq:appendix_query_stability}
\end{equation}
Moreover, when $g$ is close to zero,
$\Delta_q=\lambda_qK(g/2)+\mathcal{O}(g^3)$.
Thus, an ambiguous query induces only a small transfer, whereas a clear modality preference produces a larger but still bounded adjustment.
\end{corollary}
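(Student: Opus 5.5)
The plan is to treat $\Delta_q$ as a scalar function of the gap $g$, namely $\Delta_q(g)=\lambda_qK\tanh(g/2)$, which follows from the closed form in Eq.~\eqref{eq:appendix_softmax_margin}. Both claims then reduce to elementary calculus on $\tanh$.

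For the Lipschitz bound, I will use the derivative already computed in Eq.~\eqref{eq:appendix_query_lipschitz}. It gives $|\Delta_q'(g)|=\tfrac{\lambda_qK}{2}\operatorname{sech}^2(g/2)$. Since $\cosh(x)\ge 1$ for every real $x$, we have $\operatorname{sech}^2\le 1$, so the derivative is bounded by $\lambda_qK/2$ uniformly in $g$.

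Because $\Delta_q$ is smooth on all of $\mathbb{R}$, I then apply the mean value theorem on the segment between $g_1$ and $g_2$. This gives a point $\xi$ on that segment with $\Delta_q(g_1)-\Delta_q(g_2)=\Delta_q'(\xi)(g_1-g_2)$. Taking absolute values and inserting the uniform derivative bound yields Eq.~\eqref{eq:appendix_query_stability}. The case $g_1=g_2$ is trivial.

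For the small-$g$ expansion, I will use the Maclaurin series $\tanh x=x-x^3/3+\mathcal{O}(x^5)$, which is valid for $|x|<\pi/2$. It can also be obtained by Taylor's theorem with remainder, since $\tanh$ is odd and therefore has no even-order terms. Substituting $x=g/2$ gives $\Delta_q=\lambda_qK(g/2)-\lambda_qKg^3/24+\dots$, which is $\lambda_qK(g/2)+\mathcal{O}(g^3)$ as $g\to 0$.

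The interpretive sentence of the corollary follows from two facts together. The linearization shows that $|\Delta_q|\approx\lambda_qK|g|/2$ is small when $g\approx 0$. The preceding Proposition already gives the global bound $|\Delta_q|\le\lambda_qK$, so large $|g|$ still produces a bounded transfer.

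I do not expect a real obstacle here. The only point needing care is stating that the argument applies to the pre-projection transfer, exactly as in the Proposition. $\operatorname{ProjectExact}$ and the integer rounding are not part of $\Delta_q$, so they do not enter this corollary.
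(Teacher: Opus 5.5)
Your proposal is correct and matches the paper's own argument. The paper likewise bounds $|\partial\Delta_q/\partial g|=\tfrac{\lambda_qK}{2}\operatorname{sech}^2(g/2)\le\tfrac{\lambda_qK}{2}$ and applies the mean value theorem, and the small-$g$ statement comes from the Maclaurin expansion of $\tanh$, where you additionally make the cubic term $-\lambda_qKg^3/24$ explicit, which the paper leaves implicit.
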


\subsection{Directional and Bounded Intra-Modal Redistribution}

For this analysis, let
\begin{equation}
    \widetilde z_i^m
    =(R_i^m-C_i^m)
    -\frac{1}{n_m}\sum_{j=1}^{n_m}(R_j^m-C_j^m),
    \qquad
    z_i^m=\operatorname{Clip}(\widetilde z_i^m,-1,1),
    \label{eq:appendix_centered_score}
\end{equation}
where the clipped score is the bounded implementation used in Algorithm~\ref{alg:omnidelata}.
Centering removes the common bias shared by all units, while clipping keeps the priority
$w_i^m=(1-z_i^m)/2$ in $[0,1]$.

\begin{proposition}[Direction and magnitude of local shifts]
Assume the original interval $[\ell_i^m,h_i^m]$ is feasible and let
$k_i^m\in[\ell_i^m,h_i^m]$.
Then
\begin{equation}
    z_i^m>0 \Rightarrow k_i^m\leq k_i^{0,m},
    \qquad
    z_i^m<0 \Rightarrow k_i^m\geq k_i^{0,m},
    \label{eq:appendix_shift_direction}
\end{equation}
and in both cases
\begin{equation}
    |k_i^m-k_i^{0,m}|\leq\lambda_mL_i.
    \label{eq:appendix_local_bound}
\end{equation}
\end{proposition}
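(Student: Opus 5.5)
The argument only uses the definitions of the keep interval in Eq.~\eqref{eq:unit_keep_bounds} and the hypothesis $k_i^m\in[\ell_i^m,h_i^m]$. The allocation rule in Eq.~\eqref{eq:unit_keep_allocation} and the largest-remainder step are not needed beyond guaranteeing this membership, which the statement assumes. We split into the three sign cases of $z_i^m$ and use the identities $[x]_+ - [-x]_+ = x$ and $[x]_+[-x]_+=0$. We also use that $\lambda_m\ge 0$ and $L_i\ge 0$.

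\emph{Case $z_i^m>0$.} Here $[-z_i^m]_+=0$, so $h_i^m=k_i^{0,m}$ and $\ell_i^m=k_i^{0,m}-\lambda_m z_i^m L_i$. The upper end of the interval then gives $k_i^m\le k_i^{0,m}$. The lower end gives $k_i^{0,m}-k_i^m\le \lambda_m z_i^m L_i$. \emph{Case $z_i^m<0$.} The argument is symmetric: $\ell_i^m=k_i^{0,m}$ gives $k_i^m\ge k_i^{0,m}$, and $h_i^m$ gives $k_i^m-k_i^{0,m}\le\lambda_m(-z_i^m)L_i=\lambda_m|z_i^m|L_i$. \emph{Case $z_i^m=0$.} The interval degenerates to $\{k_i^{0,m}\}$, so the displacement is zero. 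Taken together, these cases give the directional claims of Eq.~\eqref{eq:appendix_shift_direction} and the sharper bound $|k_i^m-k_i^{0,m}|\le\lambda_m|z_i^m|L_i$ in every case.

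To reach Eq.~\eqref{eq:appendix_local_bound}, we invoke the clipping in Eq.~\eqref{eq:appendix_centered_score}, which ensures $|z_i^m|\le 1$. Multiplying by the nonnegative factor $\lambda_m L_i$ yields $\lambda_m|z_i^m|L_i\le\lambda_m L_i$.

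The main subtlety is not computational. It is making sure the hypotheses are used correctly. The bound relies on the \emph{clipped} score; the unclipped $\widetilde z_i^m$ of Eq.~\eqref{eq:appendix_centered_score} could exceed $1$ in magnitude. The argument also uses the fact that the final integer count $k_i^m$, produced after $\operatorname{LargestRemainder}$, still lies in the original interval. The statement takes this as an assumption rather than deriving it. I would remark that this assumption is exactly what the bounded largest-remainder rounding in Algorithm~\ref{alg:omnidelata} enforces. I would also note that if the interval endpoints are non-integer, the bounded rounding must respect $\lceil\ell_i^m\rceil$ and $\lfloor h_i^m\rfloor$, which only tightens the inequalities.
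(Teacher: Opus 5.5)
Your proof is correct and follows the paper's own argument: the same three-way case split on the sign of $z_i^m$, reading off $h_i^m=k_i^{0,m}$ (when $z_i^m>0$) or $\ell_i^m=k_i^{0,m}$ (when $z_i^m<0$) from Eq.~\eqref{eq:unit_keep_bounds}, with a degenerate interval when $z_i^m=0$. You go slightly further than the paper by stating the sharper bound $\lambda_m|z_i^m|L_i$ and by explicitly using the clipping $|z_i^m|\le 1$ and $\lambda_m,L_i\ge 0$, which the paper's proof relies on only implicitly when it says the shift is at most $\lambda_mL_i$.
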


\begin{proof}
If $z_i^m>0$, then $[-z_i^m]_+=0$ and hence
$h_i^m=k_i^{0,m}$, while
$\ell_i^m=k_i^{0,m}-\lambda_m z_i^mL_i$.
The unit can therefore only lose budget, by at most $\lambda_mL_i$.
If $z_i^m<0$, then $[z_i^m]_+=0$, so
$\ell_i^m=k_i^{0,m}$ and
$h_i^m=k_i^{0,m}+\lambda_m(-z_i^m)L_i$.
The unit can only gain budget, again by at most $\lambda_mL_i$.
For $z_i^m=0$, the interval collapses to the prior count.
\end{proof}

This property gives the complexity-redundancy score a direct allocation interpretation.
A unit with high temporal redundancy relative to its internal complexity has $z_i^m>0$ and can be compressed more aggressively, whereas a complex and less repetitive unit has $z_i^m<0$ and receives additional capacity.
If the original intervals cannot contain the exact modality budget, the implementation minimally relaxes their boundaries before integer allocation; this repair preserves token-count feasibility and is needed only at boundary cases.

\subsection{Exact Integer Budget Conservation}

After interval construction and any required feasibility repair, let
$\ell_i^m,h_i^m\in\mathbb{Z}$ satisfy
\begin{equation}
    \sum_i\ell_i^m\leq K_m\leq\sum_i h_i^m.
    \label{eq:appendix_interval_feasibility}
\end{equation}
Define the residual budget and capacity by
\begin{equation}
    B_m=K_m-\sum_i\ell_i^m,
    \qquad
    c_i^m=h_i^m-\ell_i^m.
    \label{eq:appendix_residual_capacity}
\end{equation}

\begin{proposition}[Exact bounded integer allocation]
The bounded largest-remainder procedure returns integer counts $k_i^m$ satisfying
\begin{equation}
    \ell_i^m\leq k_i^m\leq h_i^m,
    \qquad
    \sum_i k_i^m=K_m.
    \label{eq:appendix_exact_local_budget}
\end{equation}
\end{proposition}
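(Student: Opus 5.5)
We will first fix what the bounded largest-remainder procedure is, since the statement concerns its output. The procedure starts from the continuous allocation $\hat k_i^m=\ell_i^m+B_m\,\frac{w_i^m c_i^m}{\sum_j w_j^m c_j^m}$. When the denominator vanishes, we fall back to weights proportional to $c_i^m$; if all $c_i^m=0$, feasibility forces $B_m=0$. It sets $k_i^m=\lfloor\hat k_i^m\rfloor$ and then hands the $D_m=K_m-\sum_i\lfloor\hat k_i^m\rfloor$ leftover units one at a time, in decreasing order of fractional part $\hat k_i^m-\lfloor\hat k_i^m\rfloor$, to units $i$ with $k_i^m<h_i^m$.

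The proof then has three steps.

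\textbf{Step 1: the continuous allocation is bounded and sums correctly.} Write $\pi_i=w_i^m c_i^m/\sum_j w_j^m c_j^m$. Then $\pi_i\ge0$ and $\sum_i\pi_i=1$, so $\sum_i\hat k_i^m=\sum_i\ell_i^m+B_m=K_m$, and $\hat k_i^m\ge\ell_i^m$ because $B_m\ge0$ by \eqref{eq:appendix_interval_feasibility}. The upper bound $\hat k_i^m\le h_i^m$ is equivalent to $B_m\pi_i\le c_i^m$. This is not automatic when the priorities $w_i^m$ are unequal, because a unit with large priority and small capacity can be overfilled. So we do not claim $\hat k_i^m\le h_i^m$ in general. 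Instead we first clip, $\bar k_i^m=\min(\hat k_i^m,h_i^m)$, and redistribute the clipped excess proportionally among unsaturated units. This is the standard water-filling iteration. Each round saturates at least one more unit, so it terminates after at most $n_m$ rounds. The total $B_m\le\sum_i c_i^m$ guarantees that the iteration ends with all units within $[\ell_i^m,h_i^m]$ and the sum still equal to $K_m$.

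\textbf{Step 2: the floors are feasible and leave a small deficit.} Because $\ell_i^m,h_i^m\in\mathbb Z$, flooring keeps $\ell_i^m\le\lfloor\bar k_i^m\rfloor\le h_i^m$. The deficit $D_m=K_m-\sum_i\lfloor\bar k_i^m\rfloor=\sum_i(\bar k_i^m-\lfloor\bar k_i^m\rfloor)$ is a sum of fractional parts. Since $K_m$ is an integer, $D_m$ is an integer with $0\le D_m<|\{i:\bar k_i^m\notin\mathbb Z\}|$.

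\textbf{Step 3: distribute the remainder without breaking the bounds.} Every unit with a nonzero fractional part satisfies $\lfloor\bar k_i^m\rfloor<\bar k_i^m\le h_i^m$. Since $h_i^m$ is an integer, this gives $\lfloor\bar k_i^m\rfloor+1\le h_i^m$. Adding one unit to each of the $D_m$ units with the largest fractional parts therefore respects the upper bounds, and the lower bounds are untouched. The resulting integer counts sum to exactly $\sum_i\lfloor\bar k_i^m\rfloor+D_m=K_m$, which is \eqref{eq:appendix_exact_local_budget}.

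The main obstacle is Step 1: the upper bound on the raw proportional allocation, which can fail under heterogeneous priorities. We will handle it with the clip-and-redistribute loop before rounding and argue termination via the strictly growing saturated set. Alternatively, we can note that the statement only requires the largest-remainder step to be bounded, and define the procedure to include this capped redistribution.
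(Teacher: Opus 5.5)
Your proof is correct for the procedure you define, but it takes a different route from the paper. The paper never treats $\hat k_i^m$ as a hard target. It models the bounded largest-remainder routine as a greedy loop that starts at $k_i^m=\ell_i^m$ and adds one token at a time, only ever to an index with $x_i^m<c_i^m$. Feasibility $0\le B_m\le\sum_i c_i^m$ then gives an invariant, $0\le x_i^m\le c_i^m$, plus a progress argument: while fewer than $B_m$ tokens are placed, unused aggregate capacity is positive, so some index is still eligible. The result follows in a few lines for any ordering of the increments. You instead first make the continuous target feasible by clip-and-redistribute water-filling, and then apply classical floor-plus-largest-remainder rounding. Your observation that $\hat k_i^m\le h_i^m$ can fail under unequal priorities is correct. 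It can actually happen here, e.g.\ with one unit at $z_i^m=1$, one at $z_i^m=-\varepsilon$, and many small negative scores. The paper's formulation sidesteps this issue because $\hat k_i^m$ only ranks the increments. Your version buys an extra fidelity guarantee, $|k_i^m-\bar k_i^m|<1$, so the integer counts stay within one token of the capped continuous allocation; the paper's argument gives feasibility but no such proximity. The cost is the extra machinery (termination of the redistribution loop, degenerate-weight handling). Also, you certify one particular scheme rather than every capacity-respecting greedy scheme, such as the priority-based implementation described in the complexity section. Two small repairs are needed. First, Step~2 should read $D_m\le|\{i:\bar k_i^m\notin\mathbb Z\}|$: your strict inequality fails when every $\bar k_i^m$ is an integer, and Step~3 only needs the non-strict form. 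Second, the zero-denominator fallback must also be applied inside the redistribution rounds. A unit with $z_i^m=1$ has $w_i^m=0$ yet can have positive capacity, so the unsaturated set may consist solely of zero-weight units.
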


\begin{proof}
The procedure starts from the lower bounds and writes
$k_i^m=\ell_i^m+x_i^m$, where $x_i^m$ is an integer initialized to zero.
Equation~\eqref{eq:appendix_interval_feasibility} implies
$0\leq B_m\leq\sum_i c_i^m$.
Each allocation step increases only an index with $x_i^m<c_i^m$ and therefore preserves
$0\leq x_i^m\leq c_i^m$.
If fewer than $B_m$ residual tokens have been assigned, the unused aggregate capacity is positive, so at least one feasible index remains.
Consequently, the procedure terminates with $\sum_i x_i^m=B_m$.
Substitution into $k_i^m=\ell_i^m+x_i^m$ proves both the interval constraints and the exact sum.
\end{proof}

\begin{corollary}[Sample-level retained-ratio guarantee]
Combining exact local allocation with the inter-modal projection gives
\begin{equation}
    \sum_i k_i^{\mathrm{a}}+\sum_i k_i^{\mathrm{v}}
    =K_{\mathrm{a}}+K_{\mathrm{v}}
    =K
    =\operatorname{round}\left(r(N_{\mathrm{a}}+N_{\mathrm{v}})\right).
    \label{eq:appendix_exact_global_budget}
\end{equation}
Hence, different queries and local redundancy patterns change only the placement of the budget, not the sample-level retained ratio.
\end{corollary}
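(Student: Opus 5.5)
The corollary should follow by chaining the two levels of the allocator. We apply the exact bounded integer allocation proposition once per modality and then use the conservation property of the inter-modal stage.

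The first step is the local level. For each $m\in\{\mathrm{a},\mathrm{v}\}$, Algorithm~\ref{alg:omnidelata} calls $\operatorname{LargestRemainder}$ with target $K_m$, after interval construction and any feasibility repair. That repair is exactly what enforces \eqref{eq:appendix_interval_feasibility}. The preceding proposition then gives $\sum_i k_i^m=K_m$ for $m=\mathrm{a}$ and $m=\mathrm{v}$ separately. Adding the two identities yields $\sum_i k_i^{\mathrm{a}}+\sum_i k_i^{\mathrm{v}}=K_{\mathrm{a}}+K_{\mathrm{v}}$.

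The second step is the inter-modal level. Before projection, the bounded-transfer proposition gives $\widetilde K_{\mathrm{a}}+\widetilde K_{\mathrm{v}}=K$. We then invoke the defining property of $\operatorname{ProjectExact}$: it returns nonnegative integers in the feasible ranges whose sum is exactly $K$. This gives $K_{\mathrm{a}}+K_{\mathrm{v}}=K$. The remaining equality $K=\operatorname{round}(r(N_{\mathrm{a}}+N_{\mathrm{v}}))$ is just the definition \eqref{eq:total_budget}, which is the first line of the algorithm.

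The main obstacle is justifying that $\operatorname{ProjectExact}$ really preserves the sum, since the paper specifies it only behaviourally. We would make it concrete as follows, with feasible ranges $0\le K_{\mathrm{a}}\le N_{\mathrm{a}}$ and $0\le K_{\mathrm{v}}\le N_{\mathrm{v}}$.
\begin{itemize}
\item Clip $\widetilde K_{\mathrm{v}}$ to $[\max(0,K-N_{\mathrm{a}}),\min(N_{\mathrm{v}},K)]$ and round it to an integer $K_{\mathrm{v}}$.
\item Set $K_{\mathrm{a}}=K-K_{\mathrm{v}}$.
\end{itemize}
This interval is nonempty because $0\le K\le N_{\mathrm{a}}+N_{\mathrm{v}}$ when $r\in[0,1]$. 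Its endpoints are integers, so rounding stays inside it. With this choice the sum is $K$ by construction and both budgets lie in their ranges.

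A secondary point is that the local feasibility condition \eqref{eq:appendix_interval_feasibility} must hold after the repair step. The repair can always widen the intervals to $[0,L_i]$. In that case $\sum_i\ell_i^m=0\le K_m\le \sum_i L_i=N_m$, so a feasible repair always exists.

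Chaining the local identity, the inter-modal identity, and the definition of $K$ gives \eqref{eq:appendix_exact_global_budget}. The final remark follows because none of these equalities depends on $q$, on $m_q$, or on the scores $z_i^m$: they affect only the individual $k_i^m$, not the total.
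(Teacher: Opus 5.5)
Your proposal is correct and follows the paper's own route: apply the exact bounded integer allocation proposition in each modality and add the results, then use the sum-preserving inter-modal transfer together with the defining property of $\operatorname{ProjectExact}$ and the definition of $K$. Your explicit construction of $\operatorname{ProjectExact}$ and your existence argument for the feasibility repair are extra detail that the paper leaves implicit, and they do not change the argument. The repair is in fact barely needed, because $\ell_i^m\le k_i^{0,m}\le h_i^m$ together with $\sum_i k_i^{0,m}=K_m$ already gives the sum condition in \eqref{eq:appendix_interval_feasibility}.
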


\subsection{Why Heterogeneous Budgets Are Necessary}

We finally give an optimization interpretation of fine-grained allocation.
Let $D_i^m(k)$ denote the information loss of unit $u_i^m$ after retaining $k$ tokens.

\begin{assumption}[Diminishing returns]
For each local unit, $D_i^m(k)$ is differentiable, decreasing, and convex in $k$.
Thus, retaining more tokens cannot increase information loss, while the benefit of each additional token gradually diminishes.
\end{assumption}

Under this assumption, the ideal continuous allocation solves
\begin{equation}
    \min_{\{k_i^m\}}
    \sum_{i=1}^{n_m}D_i^m(k_i^m)
    \quad
    \text{s.t.}
    \quad
    \sum_i k_i^m=K_m,
    \quad
    \ell_i^m\leq k_i^m\leq h_i^m.
    \label{eq:appendix_loss_optimization}
\end{equation}

\begin{proposition}[Uniform allocation under heterogeneous marginal gains]
Suppose the uniform allocation $\bar{k}=K_m/n_m$ lies in the interior of all feasible intervals.
If there exist units $i$ and $j$ such that
\begin{equation}
    -(D_i^m)'(\bar{k})\neq-(D_j^m)'(\bar{k}),
    \label{eq:appendix_unequal_marginal_gain}
\end{equation}
then the uniform allocation is not optimal for Eq.~\eqref{eq:appendix_loss_optimization}.
\end{proposition}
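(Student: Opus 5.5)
The plan is a direct perturbation argument that exhibits a feasible direction of strict descent at the uniform point $\bar{k}=K_m/n_m$. The idea is to move a small amount $\varepsilon>0$ of budget from the unit with the smaller marginal gain to the unit with the larger one. Without loss of generality, relabel $i$ and $j$ so that $-(D_i^m)'(\bar{k})>-(D_j^m)'(\bar{k})$, i.e., $(D_i^m)'(\bar{k})<(D_j^m)'(\bar{k})$. Then define the perturbed allocation $k_i^m=\bar{k}+\varepsilon$, $k_j^m=\bar{k}-\varepsilon$, and $k_l^m=\bar{k}$ for all $l\neq i,j$.

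First I will check feasibility. The total $\sum_l k_l^m=n_m\bar{k}=K_m$ is unchanged, since the perturbation only transfers mass between two units. Because $\bar{k}$ lies in the interior of every interval $[\ell_l^m,h_l^m]$, there is $\varepsilon_0>0$ such that $\bar{k}\pm\varepsilon\in(\ell_l^m,h_l^m)$ for all $l$ and all $0<\varepsilon<\varepsilon_0$. Hence the perturbed point is feasible for Eq.~\eqref{eq:appendix_loss_optimization}.

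Next I will compare objective values. Let
\[
\phi(\varepsilon)=D_i^m(\bar{k}+\varepsilon)+D_j^m(\bar{k}-\varepsilon)-D_i^m(\bar{k})-D_j^m(\bar{k}).
\]
Then $\phi(0)=0$, and by differentiability
\[
\phi'(0)=(D_i^m)'(\bar{k})-(D_j^m)'(\bar{k})<0.
\]
By the definition of the derivative, $\phi(\varepsilon)<0$ for all sufficiently small $\varepsilon>0$. So the perturbed allocation is feasible and has strictly smaller total loss, which means the uniform allocation is not optimal.

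This step needs only differentiability, not convexity. Convexity from the Assumption on diminishing returns could be used to phrase the argument through the KKT conditions instead. At an interior optimum of the convex problem, all marginal losses $(D_l^m)'(k_l^m)$ must equal a common multiplier. At $\bar{k}$ this condition fails by Eq.~\eqref{eq:appendix_unequal_marginal_gain}. Since KKT conditions are necessary here (the constraints are linear), $\bar{k}$ cannot be optimal. I will mention this only as a remark.

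The only real obstacle is the bookkeeping that keeps the perturbation inside all boxes, and the interior hypothesis handles it. I also need to fix the sign convention: mass must move toward the unit with the larger marginal gain $-(D_i^m)'$, not the other way. I will state explicitly that the claim concerns the continuous relaxation, before largest-remainder rounding.
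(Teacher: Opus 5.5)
Your proposal is correct and follows the paper's own proof. Both transfer a small $\varepsilon$ from the unit with the smaller marginal gain to the unit with the larger one, show that the first-order change $\varepsilon((D_i^m)'(\bar{k})-(D_j^m)'(\bar{k}))+o(\varepsilon)$ is negative, and close with the KKT equal-marginal-gain remark. Your explicit use of the interior hypothesis to keep the perturbed allocation feasible, and your note that only differentiability (not convexity) is needed, state details the paper leaves implicit.
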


\begin{proof}
Without loss of generality, assume
$-(D_i^m)'(\bar{k})>-(D_j^m)'(\bar{k})$.
Transfer a sufficiently small budget $\varepsilon>0$ from unit $j$ to unit $i$.
The first-order change in total loss is
\begin{equation}
    \Delta D
    =\varepsilon(D_i^m)'(\bar{k})
    -\varepsilon(D_j^m)'(\bar{k})
    +o(\varepsilon)<0.
    \label{eq:appendix_loss_decrease}
\end{equation}
Thus, the transfer strictly decreases total information loss while preserving the total budget, contradicting the optimality of the uniform allocation.
At an interior optimum, the Karush--Kuhn--Tucker conditions instead require all active units to have equal marginal gains.
\end{proof}

The loss functions are not directly observable during inference.
\method uses local complexity and temporal redundancy as training-free proxies for their marginal gains: high complexity and low redundancy indicate that removing another token is more likely to lose information, while repetitive units can absorb more compression.
This proposition motivates non-uniform allocation but does not assume that the proposed signals recover the unknown optimal loss functions exactly.

\section{Computational and Memory Complexity}
\label{app:complexity}

\subsection{Notation}

Let $N=N_{\mathrm{a}}+N_{\mathrm{v}}$ be the number of audio-video tokens before pruning, $N_{\mathrm{t}}$ the number of text and special tokens retained unchanged, $d$ the LLM hidden dimension shared by the projected audio, video, and text token representations, and $L$ the number of LLM layers.
The full and compressed LLM input lengths are denoted by $\bar N=N_{\mathrm{t}}+N$ and $\bar N_r=N_{\mathrm{t}}+rN$, respectively.
Let $S=|\mathcal{S}_{\mathrm{a}}|+|\mathcal{S}_{\mathrm{v}}|$ be the total number of skills, $L_q$ the query length, and $k$ the routing Top-$k$.
We use $U=n_{\mathrm{a}}+n_{\mathrm{v}}$ for the total number of audio segments and video frames, and $L_{\max}$ for the maximum number of tokens in a local unit.

\subsection{Skill-Pool Routing}

Skill embeddings are sample-independent and can be cached before inference.
If a skill contains at most $\bar L_s$ text tokens, constructing the complete bank once costs
\begin{equation}
    T_{\mathrm{skill}}^{\mathrm{cache}}=\mathcal{O}(S\bar L_s d),
    \qquad
    M_{\mathrm{skill}}=\mathcal{O}(Sd).
    \label{eq:appendix_skill_cache_complexity}
\end{equation}
At inference time, mean-pooling the query embeddings costs $\mathcal{O}(L_qd)$, computing its similarities with all cached skills costs $\mathcal{O}(Sd)$, and selecting the Top-$k$ entries costs $\mathcal{O}(S\log k)$.
The online routing overhead is therefore
\begin{equation}
    T_{\mathrm{route}}
    =\mathcal{O}\left((L_q+S)d+S\log k\right),
    \label{eq:appendix_router_complexity}
\end{equation}
which is linear in the skill-bank size and is incurred only once per query.

\subsection{Intra-Modal Signal and Allocation Overhead}

Computing unit means and intra-unit complexity visits each modality token once, giving $\mathcal{O}(Nd)$ time.
Previous-unit redundancy compares aligned token positions between adjacent units; each token participates in only a constant number of such comparisons, so this step is also $\mathcal{O}(Nd)$.
Min-max normalization, score centering, and interval construction require $\mathcal{O}(U)$ operations.

The bounded largest-remainder implementation repeatedly assigns residual counts to units with remaining capacity.
Using a priority operation over $U$ units, its cost is bounded by
$\mathcal{O}(L_{\max}U\log U)$, because no unit can receive more than $L_{\max}$ integer increments.
In Qwen2.5-Omni, $L_{\max}$ is fixed by tokenization: an audio segment contains 25 tokens and a video frame contains approximately 72 tokens.
This term is therefore effectively $\mathcal{O}(U\log U)$ for the models considered in this work.
Combining inter- and intra-modal allocation gives
\begin{equation}
    T_{\mathrm{alloc}}
    =\mathcal{O}\left((N+L_q+S)d+S\log k+L_{\max}U\log U\right).
    \label{eq:appendix_allocator_complexity}
\end{equation}

The projected audio-video features are already produced by the encoders and are shared with the base model.
Excluding these shared features, the allocator stores the cached skill bank and a constant number of statistics per local unit, resulting in
\begin{equation}
    M_{\mathrm{alloc}}=\mathcal{O}(Sd+U).
    \label{eq:appendix_allocator_memory}
\end{equation}

\begin{table}[H]
\centering
\caption{Complexity of the main components in dynamic budget allocation. Skill-bank construction is a one-time cost; the remaining components are evaluated once per sample.}
\label{tab:appendix_complexity}
\small
\setlength{\tabcolsep}{8pt}
\begin{tabular}{lcc}
\toprule
\textbf{Component} & \textbf{Time} & \textbf{Additional Memory} \\
\midrule
Skill-bank construction & $\mathcal{O}(S\bar L_s d)$ & $\mathcal{O}(Sd)$ \\
Online query routing & $\mathcal{O}((L_q+S)d+S\log k)$ & $\mathcal{O}(d)$ \\
Local signal estimation & $\mathcal{O}(Nd)$ & $\mathcal{O}(U)$ \\
Bounded integer allocation & $\mathcal{O}(L_{\max}U\log U)$ & $\mathcal{O}(U)$ \\
\bottomrule
\end{tabular}
\end{table}

\subsection{Overall Inference and Memory Complexity}

For the full sequence of $\bar N$ tokens, the dominant Transformer operations have time complexity
\begin{equation}
    T_{\mathrm{full}}
    =\mathcal{O}\left(L(\bar N^2d+\bar Nd^2)\right),
    \label{eq:appendix_full_complexity}
\end{equation}
where $\bar N^2d$ is the self-attention term and $\bar Nd^2$ covers linear projections and feed-forward layers.
After retaining $rN$ audio-video tokens, the corresponding LLM cost becomes
\begin{equation}
    T_{\mathrm{LLM}}^{\mathrm{compressed}}
    =\mathcal{O}\left(L(\bar N_r^2d+\bar N_r d^2)\right).
    \label{eq:appendix_compressed_complexity}
\end{equation}
Therefore, the complete inference cost can be written as
\begin{equation}
    T_{\mathrm{OmniDelta}}
    =T_{\mathrm{encoder}}+T_{\mathrm{prune}}+T_{\mathrm{alloc}}
    +\mathcal{O}\left(L(\bar N_r^2d+\bar N_r d^2)\right),
    \label{eq:appendix_total_complexity}
\end{equation}
where $T_{\mathrm{prune}}$ depends on the compatible pruning backend and is not introduced by the budget allocator.
The quadratic attention term is reduced by a factor of $(\bar N_r/\bar N)^2$.
When the audio-video sequence dominates the much shorter text sequence, this factor approaches $r^2$; retained ratios of 25\% and 20\% then correspond to approximately 6.25\% and 4\% of the full pairwise-attention term, respectively.
The actual end-to-end speedup is smaller because encoder computation, projections, feed-forward layers, and decoding are not reduced by the same quadratic factor.

Ignoring model parameters shared by all settings, the sequence-dependent KV-cache memory changes from
$\mathcal{O}(L\bar Nd)$ to $\mathcal{O}(L\bar N_rd)$, while the allocator adds only $\mathcal{O}(Sd+U)$ memory.
Importantly, \method preserves the same $rN$ token count as any fixed-budget method at the same retained ratio.
Thus, its theoretical downstream cost is unchanged at a fixed budget; its purpose is to improve accuracy by placing that budget more effectively, with only linear or near-linear allocation overhead.

\section{Additional Details of the Motivation Studies}
\label{app:motivation_details}

This section supplements the two observation studies in Sec.~\ref{sec:motivation}.
We first document how the balanced modality-routing set was constructed and examine whether routing quality affects downstream question answering.
We then provide the complete protocols for the controlled video and audio budget-allocation diagnostics.

\subsection{Inter-Modal Budget Allocation Diagnostic}
\label{app:routing_details}

\paragraph{Dataset construction.}
GPT-5.5-xhigh inspects WorldSense questions and answer options and selects a diagnostic subset containing 200 strongly audio-oriented and 200 strongly video-oriented queries.
The selection depends only on the question and its candidates, rather than the ground-truth answer or a model prediction.
Mixed or ambiguous questions are excluded, and each retained query is annotated with the shortest phrase that most clearly indicates its required modality.
The prompt used for selection and annotation is reproduced below.
\clearpage
\begin{promptbox}
\small
\textbf{Prompt: balanced modality-routing subset construction.}

You need to annotate a balanced modality-routing subset from the WorldSense QA dataset. Each input sample contains a video identifier, task identifier, question, answer candidates, and, when available, its problem type and domain.

Select exactly 400 queries: 200 strongly audio-relevant queries and 200 strongly video-relevant queries. For every selected query, annotate the word in the query that most directly indicates the required modality.

An \emph{audio-relevant} query requires evidence such as speech, speaker identity, voice changes, music, rhythm, pitch, volume, sound events, sound counting, sound-source localization, or the presence of a sound. A \emph{video-relevant} query requires evidence such as objects, people, actions, appearance, clothing, color, on-screen text, diagrams, spatial relations, scene type, temporal visual order, or visible events.

Follow these rules:
\begin{enumerate}[leftmargin=*,nosep]
    \item Inspect all samples individually; do not randomly sample.
    \item Classify from the question and answer candidates, without using the ground-truth answer or a model prediction.
    \item Prefer high-confidence questions whose required modality is unambiguous, and exclude questions for which both modalities are necessary or either modality alone is plausible.
    \item If several cues occur, record the most diagnostic phrase and optionally record secondary cues. If the answer candidates reinforce the modality, mention this only in the reason.
    \item If more than 200 high-confidence samples are available for one modality, prioritize explicit modality cues, modality-specific candidates, high confidence, and diversity across tasks and domains.
\end{enumerate}

For every inspected sample, return its identifiers, question, candidates, candidate modality (audio, video, or mixed/ambiguous), confidence from 1 to 5, primary modality-relevant phrase, optional secondary phrases, and a brief reason. After inspection, return the final selected set with an \texttt{oracle\_modality} field for each item.

Before completing the task, verify that the subset contains exactly 400 samples, with exactly 200 audio and 200 video queries; every item has a nonempty relevant word; and no selected item is mixed or ambiguous.
\end{promptbox}

\paragraph{Representative examples.}
The following examples illustrate the resulting annotation criteria.

\noindent\textbf{Audio-oriented example.}
The query is ``How does the rhythm of the pipa music change at the beginning of the video?''
Its candidates are (A) from gentle to heavy, (B) from rapid to slow, (C) from heavy to gentle, and (D) from slow to rapid.
The primary relevant word is ``rhythm'', with ``pipa music'' and ``change'' as secondary cues.
Both the query and all candidates describe an acoustic temporal change, making the required modality unambiguous.

\noindent\textbf{Video-oriented example.}
The query is ``What is the man's appearance in the painting shown at the beginning of the video?''
Its candidates are (A) wearing 17th-century clothing with a hat, (B) wearing 17th-century clothing with long braids, (C) wearing 18th-century clothing with hair combed back, and (D) wearing 17th-century clothing with long hair and a beard.
The primary relevant phrase is ``appearance'', with ``painting shown'' as a secondary cue.
Answering the question requires inspection of visible attributes rather than acoustic evidence.

\paragraph{Effect on downstream accuracy.}
The routing experiment in Sec.~\ref{sec:similarity_analysis} evaluates whether a signal identifies the required modality, but routing is ultimately useful only if it improves the final answer.
We therefore compare three top-level policies on WorldSense with Qwen2.5-Omni-7B at a 25\% retained-token ratio.
All methods use the same \base intra-modal allocation and pruning backend.
The latter two methods differ only in the inter-modal routing signal and use the same query shift coefficient, $\lambda_q=0.05$.

\begin{table}[H]
\centering
\caption{Effect of the inter-modal routing signal on downstream WorldSense accuracy. All methods use the same intra-modal allocation and token-pruning policy. Best results are bolded.}
\label{tab:appendix_router_downstream}
\scriptsize
\setlength{\tabcolsep}{3.2pt}
\resizebox{\linewidth}{!}{%
\begin{tabular}{L{3.05cm}*{8}{C{0.90cm}}C{0.78cm}}
\toprule
\textbf{Inter-Modal Policy} & \textbf{Tech} & \textbf{Cult.} & \textbf{Daily} & \textbf{Film} & \textbf{Perf.} & \textbf{Games} & \textbf{Sports} & \textbf{Music} & \textbf{Avg.} \\
\midrule
\base & 47.1 & 46.9 & 43.6 & 39.6 & \textbf{40.4} & \textbf{39.9} & 40.0 & 45.1 & 43.2 \\
Embedding routing & 46.9 & 47.6 & 44.4 & \textbf{42.0} & 38.2 & 39.1 & 40.5 & \textbf{45.8} & 43.5 \\
\rowcolor{FullTokenBlue}
Skill-pool routing & \textbf{47.8} & \textbf{48.2} & \textbf{45.4} & 41.2 & 39.3 & \textbf{39.9} & \textbf{41.2} & 45.1 & \textbf{44.0} \\
\bottomrule
\end{tabular}%
}
\end{table}

Direct embedding routing yields only a small and category-dependent gain over the fixed prior, increasing the average from 43.2\% to 43.5\%.
Skill-pool routing improves the average to 44.0\% under the same internal allocator and pruning backend.
Together with its higher routing accuracy in Fig.~\ref{fig:router_similarity_comparison}, this result indicates that matching the query to modality-level task semantics provides a more useful inter-modal budget signal than matching it directly to sample-specific audio/video representations.

\subsection{Intra-Modal Budget Allocation Diagnostics}
\label{app:fine_grained_details}

The diagnostics in Sec.~\ref{sec:fine_grained_budget_motivation} are controlled, interpretable proxies for budget allocation and content pruning.
They are not intended to reproduce the exact tokenization or pruning pipeline of an OmniLLM.
Instead, visible image regions and short audio intervals serve as human-interpretable units, allowing the retained evidence under different budget policies to be inspected directly.

\paragraph{Video protocol and prompt.}
We sample 50 temporally coherent 2-second clips from WorldSense and extract four evenly spaced visual views from each clip.
GPT-5.5-xhigh generates a descriptive question and reference answer from the complete views, constructs five variants under the same 25\% total visible budget, answers from each retained view set, and grades the answer against the reference on the 0--5 scale used in Fig.~\ref{fig:fine_grained_budget_scores}.
Masked image regions are visual proxies for removed visual tokens.

\begin{promptbox}
\small
\textbf{Prompt: video budget-allocation study.}

Randomly select 50 samples from WorldSense. For each sample, choose a temporally coherent 2-second video clip and extract four evenly spaced visual views. Inspect the complete views and write one descriptive question about the scene, objects, actions, spatial details, or visible temporal change. Do not mention frame indices, masking, pruning, or budgets. Write a reference answer using only the complete visual content.

Construct five variants with the same total retained visual budget of 25\%:
\begin{enumerate}[leftmargin=*,nosep]
    \item \textbf{Uniform budget--Random pruning:} allocate 25\% to every view and randomly retain image regions within each view.
    \item \textbf{Weighted budget--Random pruning:} use the fixed view-level allocation $[0.70,0.15,0.10,0.05]$ and randomly retain regions within each view.
    \item \textbf{First-only:} spend the complete budget on the first view and remove the remaining views.
    \item \textbf{Uniform budget--Oracle pruning:} keep the budget uniform, but use the reference answer to retain the regions that best preserve answer-relevant evidence in each view.
    \item \textbf{Oracle budget--Oracle pruning:} use the reference answer to determine both the budget distribution and retained regions while keeping the same total budget.
\end{enumerate}

Cover removed regions with a mask. For each variant, answer the question using only its retained visible content; do not consult the complete views or reference answer while answering. Compare the generated answer with the reference and assign a score:
5, complete; 4, mostly complete; 3, partial but answerable; 2, limited but inferable; 1, weak fragments; 0, unanswerable.

Return the question, reference answer, five masked variants, five answers, five scores, and brief scoring rationales for every sample. Finally, report each policy's average over the 50 samples.
\end{promptbox}

\paragraph{Audio protocol and prompt.}
We sample 50 WorldSense audio files and extract one 12-second clip from each sample.
Each clip is divided into four consecutive 3-second segments and then into 24 chunks of 0.5 seconds.
Every policy retains exactly 12 chunks (50\%), and each removed chunk is replaced with silence so that all variants remain 12 seconds long.
Gemini-3.1-Pro-Preview first generates the question, reference answer, and answer-aware chunk priorities from the complete audio; it then answers and scores each variant using only the corresponding pruned audio.
\clearpage
\begin{promptbox}
\small
\textbf{Prompt: audio budget-allocation study.}

Randomly select 50 samples from the WorldSense audio set. For each sample, extract a 12-second clip, divide it into four 3-second segments, and divide each segment into six 0.5-second chunks, giving 24 chunks in total. Listen to the complete audio and write one descriptive question about its overall audible content, such as speech, sound events, background music, speaker changes, or temporal progression. Avoid questions that can be answered from a single isolated keyword. Write a complete reference answer from the original audio.

Assign an importance ranking or score to every 0.5-second chunk according to its usefulness for answering the question and recovering the reference answer. Use these scores only to determine retained chunks. Construct five variants with the same total budget of 12 retained chunks:
\begin{enumerate}[leftmargin=*,nosep]
    \item \textbf{Uniform budget--Random pruning:} randomly retain three of the six chunks in every segment.
    \item \textbf{Weighted budget--Random pruning:} retain $[5,4,2,1]$ chunks in the four segments and select chunks randomly within each segment.
    \item \textbf{First-only:} retain all chunks from the first two segments and remove all chunks from the last two segments.
    \item \textbf{Uniform budget--Oracle pruning:} retain three chunks per segment, selecting the most important chunks according to the answer-aware ranking.
    \item \textbf{Oracle budget--Oracle pruning:} distribute the 12-chunk budget freely across segments and retain the globally most answer-relevant chunks.
\end{enumerate}

Replace removed chunks with silence. Answer the question using only the current pruned audio, without access to the original clip, reference answer, or chunk-importance scores. Compare each answer with the reference and assign a score:
5, complete; 4, mostly complete; 3, partial but answerable; 2, limited but inferable; 1, weak fragments; 0, unanswerable.

Return the question, reference answer, chunk priorities, five pruned audio clips, five answers, five scores, and brief scoring rationales for every sample. Finally, report each policy's average over the 50 samples.
\end{promptbox}

\paragraph{Controls and interpretation.}
The five policies separate the effects of budget placement and retained-content selection.
Uniform--Random fixes both a uniform budget and an uninformed selector; Weighted--Random changes only the budget distribution; and First-only represents an extreme front-loaded allocation.
The two Oracle variants are answer-aware only when constructing retained content: the answering stage never observes the reference answer or original uncompressed input.
Thus, Uniform--Oracle estimates the benefit of stronger pruning under an unchanged uniform budget, whereas Oracle--Oracle estimates the benefit of jointly improving budget placement and content selection.

\begin{table}[H]
\centering
\caption{Average answer scores in the controlled video and audio diagnostics. All policies within each modality use the same total retained budget. ``Oracle'' denotes answer-aware retention construction; answering uses only the resulting pruned input.}
\label{tab:appendix_budget_diagnostic}
\small
\setlength{\tabcolsep}{10pt}
\begin{tabular}{lcc}
\toprule
\textbf{Budget--Pruning Policy} & \textbf{Video} & \textbf{Audio} \\
\midrule
Uniform--Random & 2.03 & 2.92 \\
Weighted--Random & 2.50 & 3.08 \\
First-only & 3.17 & 3.12 \\
Uniform--Oracle & 3.25 & 3.14 \\
Oracle--Oracle & \textbf{3.55} & \textbf{3.40} \\
\bottomrule
\end{tabular}
\end{table}

The two modalities exhibit the same qualitative pattern.
Changing only the budget distribution improves over Uniform--Random even when content is selected randomly.
Conversely, Uniform--Oracle remains close to First-only and below Oracle--Oracle despite using answer-aware pruning, showing that strong local selection cannot recover evidence from a unit whose assigned budget is insufficient.
These diagnostics isolate the motivation for intra-modal allocation: under a fixed total budget, informative temporal regions require more capacity, whereas redundant regions can be compressed more aggressively.

\section{Modality Skill-Pool Details}
\label{app:skill_pool}

This section supplements the query-aware inter-modal allocator in Sec.~\ref{sec:skill_budget}.
We first describe how modality-specific task types and their lexical cues are extracted from WorldSense, and then present representative entries from the resulting audio and video skill pools.

\subsection{Skill-Pool Construction}
\label{app:skill_construction}

The skill pools are constructed offline with GPT-5.5-xhigh.
WorldSense questions and answer candidates are used to identify representative task types that predominantly require either acoustic or visual evidence.
For each task type, the model extracts compact modality-indicative keywords and expands them with semantically related words and short phrases.
The construction prompt is shown below.

\begin{promptbox}
\small
\textbf{Prompt: construction of audio and video modality skill pools.}

You are constructing two lexical skill pools for query-aware modality routing in an omni-modal language model. The input is the WorldSense QA dataset, including each question, its answer candidates, and, when available, task and domain metadata. The pools will be embedded with the model's text embedding layer and compared with an embedded query using cosine similarity. They must therefore describe the modality and task required by a query, rather than the sample-specific content of one video.

Complete the following steps:
\begin{enumerate}[leftmargin=*,nosep]
    \item Inspect the questions and answer candidates and identify representative task types that strongly require audio or video evidence. Exclude mixed or ambiguous task types for which both modalities are indispensable.
    \item Organize audio tasks around speech and dialogue, speaker and voice, music and rhythm, sound-event recognition, sound source, event counting, acoustic changes, loudness, and silence. Organize video tasks around object and scene recognition, appearance and attributes, spatial relations, on-screen text and diagrams, action and motion, temporal visual change, interaction, and visible event reasoning.
    \item For each task type, select the smallest keywords or short query phrases that indicate the required evidence. Prefer general task cues that recur across samples rather than named entities, answer-specific values, or descriptions tied to one clip.
    \item Expand each cue with useful synonyms, lexical variants, and short semantic paraphrases. For example, a speech task may contribute ``voice'', ``spoken words'', and ``who is speaking'', while an OCR task may contribute ``text'', ``caption'', and ``read on-screen text''.
    \item Keep entries concise enough to represent a single skill. Use lowercase text, preserve meaningful multiword expressions, and avoid generic function words or the uninformative word ``video'' by itself.
    \item Audit every entry for cross-modal ambiguity. A visible instrument does not make a sound-related skill visual, and the presence of a speaker does not make a speech-understanding skill visual. Retain a cue only when it reliably indicates the evidence needed to answer the query.
    \item Cover diverse task families rather than overpopulating one category. The two modality pools need not have identical sizes, but each should represent the task diversity observed in WorldSense.
\end{enumerate}

Produce four lists. \texttt{audio\_skills} and \texttt{video\_skills} contain compact words and lexical expressions;\\ \texttt{audio\_skill\_phrases} and \texttt{video\_skill\_phrases} contain short task-level descriptions. Target approximately 60--120 compact entries and 10--30 task phrases per modality. Return a JSON object with the fields \texttt{version}, \texttt{description}, \texttt{audio\_skills}, \texttt{video\_skills}, \texttt{audio\_skill\_phrases}, and \texttt{video\_skill\_phrases}.

Before returning the JSON, verify that every item is nonempty, assigned to the appropriate modality, concise enough for embedding-based retrieval, and collectively covers the major audio- and video-oriented task types.
\end{promptbox}

The resulting bank is stored in \texttt{query\_modality\_skill\_pool.json} and contains 73 compact audio entries and 105 compact video entries, together with 17 task-level phrases for each modality.
At inference time, the compact and phrase lists are concatenated within each modality, producing an audio bank of 90 entries and a video bank of 122 entries.
Each entry is encoded once using the OmniLLM thinker input embedding layer, mean-pooled, $\ell_2$-normalized, and cached.
Consequently, GPT-5.5-xhigh is used only for offline pool construction and introduces no online routing cost.

\subsection{Representative Skill-Pool Entries}
\label{app:skill_examples}

The following boxes provide a partial view of the skill pools used in our experiments.
Entries are grouped only for readability; the implementation uses one flat bank per modality and applies Top-$k$ retrieval over all compact skills and task-level phrases jointly.

\begin{skillbox}
\small
\textbf{Audio skill pool (selected entries)}
\smallskip

\renewcommand{\arraystretch}{1.15}
\noindent\begin{tabularx}{\linewidth}{@{}L{3.0cm}X@{}}
\textbf{Task family} & \textbf{Representative skills} \\
\midrule
General listening & \emph{audio, acoustic, sound, heard, hear, listen, listening} \\
Speech and speaker & \emph{voice, speech, spoken words, speaker, dialogue, narration, who is speaking, what is said} \\
Music and rhythm & \emph{music, song, singing, instrument, guitar, piano, drum, melody, rhythm, beat, background music} \\
Sound event and source & \emph{applause, cheering, noise, sound effect, engine sound, footsteps, whistle, alarm, bell, audio source, sound source} \\
Temporal and acoustic properties & \emph{count sounds, repeated sound, audio change, sound transition, volume, loud, quiet, silence} \\
Task-level phrases & \emph{answer by listening, audio evidence, sound event recognition, speech and dialogue, speaker and voice, count audio events, audio changes over time, identify the sound, listen to the soundtrack} \\
\end{tabularx}
\end{skillbox}

\begin{skillbox}
\small
\textbf{Video skill pool (selected entries)}
\smallskip

\renewcommand{\arraystretch}{1.15}
\noindent\begin{tabularx}{\linewidth}{@{}L{3.0cm}X@{}}
\textbf{Task family} & \textbf{Representative skills} \\
\midrule
Scene and object & \emph{visual, visible, shown, appearance, scene, setting, location, object, people, animal} \\
Attribute and spatial relation & \emph{clothing, wearing, color, shape, size, position, left, right, above, behind, spatial relation} \\
On-screen text and diagram & \emph{text, written, caption, subtitle, sign, label, diagram, chart, screen text, visible number} \\
Action and motion & \emph{action, activity, motion, movement, gesture, pose, what someone does} \\
Temporal and interaction reasoning & \emph{event order, sequence, before and after, state change, reaction, facial expression, interaction, cause, effect, outcome, prediction} \\
Task-level phrases & \emph{answer by looking, visual evidence, visible objects, visual appearance, scene understanding, spatial layout, read on-screen text, visible action, motion in video, temporal visual event, track visual changes} \\
\end{tabularx}
\end{skillbox}

Short keywords and their lexical variants cover diverse query expressions, while the longer phrases encode task intent more explicitly.
Because both forms share the same embedding bank, Top-$k$ matching can retrieve whichever granularity best aligns with the query without assigning hand-crafted weights to individual task families.

\end{document}